\documentclass{article}

\usepackage{microtype}
\usepackage{graphicx}
\usepackage{subcaption}
\usepackage{booktabs} 
\usepackage{pifont}
\usepackage{hyperref}

\usepackage[accepted]{icml2026}

\usepackage[utf8]{inputenc} 
\usepackage[T1]{fontenc}    
\usepackage{hyperref}       
\usepackage{url}            
\usepackage{booktabs}       
\usepackage{amsfonts}       
\usepackage{nicefrac}       
\usepackage{microtype}      
\usepackage{xcolor}         

\usepackage{relsize}
\usepackage{amsmath}
\usepackage{amssymb}
\usepackage{mathtools}
\usepackage{amsthm}
\usepackage{microtype}
\usepackage{graphicx}
\usepackage{subcaption}

\hypersetup{
    colorlinks=true,                
    linkcolor= red,                
    citecolor=[rgb]{0,0.5,0.73},
}

\usepackage{xspace}
\usepackage{float}
\usepackage{mathtools}
\usepackage{multirow}
\usepackage{makecell}
\usepackage{bm}
\usepackage{wrapfig}
\usepackage{bbding}
\usepackage{tablefootnote}
\usepackage{enumitem}
\usepackage{apxproof}
\usepackage{colortbl}
\usepackage{caption}
\usepackage[flushleft]{threeparttable}
\usepackage{mdframed}

\usepackage[capitalize,noabbrev]{cleveref}

\theoremstyle{plain}
\newtheorem{theorem}{Theorem}[section]
\newtheorem{proposition}[theorem]{Proposition}

\theoremstyle{definition}

\theoremstyle{remark}

\makeatletter
\DeclareRobustCommand\onedot{\futurelet\@let@token\@onedot}
\def\@onedot{\ifx\@let@token.\else.\null\fi\xspace}
 
\def\eg{\emph{e.g}\onedot} 
\def\ie{\emph{i.e}\onedot}

\def\aka{\emph{a.k.a}\onedot}

\makeatother
\newcommand{\sd}[1]{{\scriptsize $\mathord{\pm}#1$}}
\usepackage[textsize=tiny]{todonotes}

\begin{document}

\twocolumn[
  \icmltitle{Graph-GRPO: Training Graph Flow Models with Reinforcement Learning}



  \icmlsetsymbol{equal}{*}
  \icmlsetsymbol{corr}{$\dagger$}

  \begin{icmlauthorlist}
    \icmlauthor{Baoheng Zhu}{equal,bupt}
    \icmlauthor{Deyu Bo}{equal,nus}
    \icmlauthor{Delvin Ce Zhang}{tuos}
    \icmlauthor{Xiao Wang}{corr,buaa}
  \end{icmlauthorlist}
  
  \icmlaffiliation{bupt}{Beijing University of Posts and Telecommunications}
  \icmlaffiliation{nus}{National University of Singapore}
  \icmlaffiliation{tuos}{University of Sheffield}
  \icmlaffiliation{buaa}{Beihang University}

  \icmlcorrespondingauthor{Xiao Wang}{xiao\_wang@buaa.edu.cn}

  \icmlkeywords{Graph Neural Network, Graph Generation, Reinforcement Learning}

  \vskip 0.3in
]



\printAffiliationsAndNotice{\icmlEqualContribution}  
    
\begin{abstract}
Graph generation is a fundamental task with broad applications, such as drug discovery.
Recently, discrete flow matching-based graph generation, \aka, graph flow model (GFM), has emerged due to its superior performance and flexible sampling.
However, effectively aligning GFMs with complex human preferences or task-specific objectives remains a significant challenge.
In this paper, we propose Graph-GRPO, an online reinforcement learning (RL) framework for training GFMs under verifiable rewards.
Our method makes two key contributions:
(1) We derive an analytical expression for the transition probability of GFMs, replacing the Monte Carlo sampling and enabling fully differentiable rollouts for RL training;
(2) We propose a refinement strategy that randomly perturbs specific nodes and edges in a graph, and regenerates them, allowing for localized exploration and self-improvement of generation quality.
Extensive experiments on both synthetic and real datasets demonstrate the effectiveness of Graph-GRPO.
With only 50 denoising steps, our method achieves 95.0\% and 97.5\% Valid-Unique-Novelty scores on the planar and tree datasets, respectively.
Moreover, Graph-GRPO achieves state-of-the-art performance on the molecular optimization tasks, outperforming graph-based and fragment-based RL methods as well as classic genetic algorithms.
Code is available in \url{https://github.com/Zhubaoheng/Graph-GRPO}.

\end{abstract}

\section{Introduction}


\begin{figure}[t]
  \centering
  \begin{subfigure}[b]{0.49\linewidth}
    \centering
    \includegraphics[width=\linewidth]{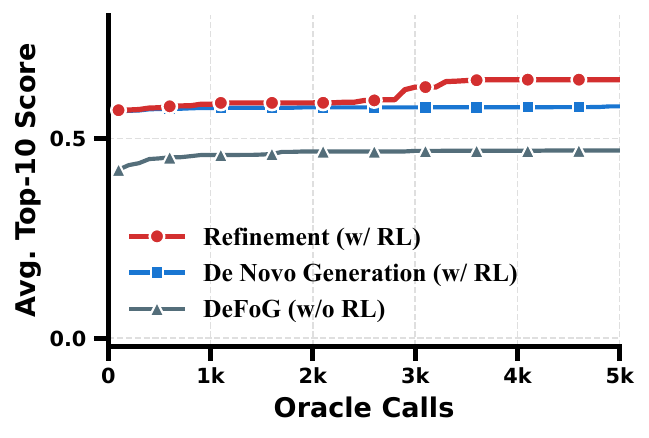}
    \caption{Scaffold Hopping}
    \label{fig:scaffold}
  \end{subfigure}
  \hfill
  \begin{subfigure}[b]{0.49\linewidth}
    \centering
    \includegraphics[width=\linewidth]{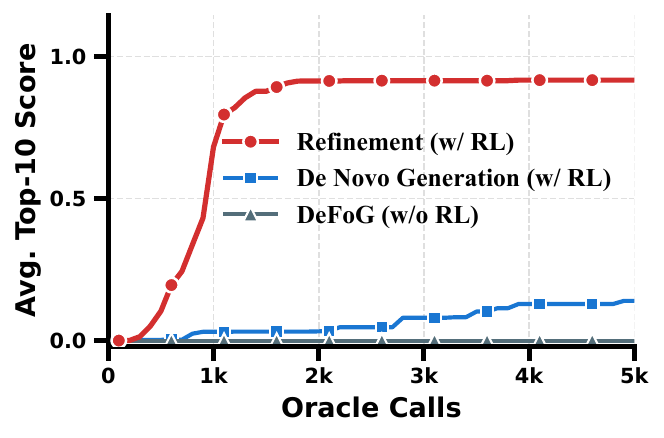}
    \caption{Valsartan SMARTS}
    \label{fig:valsartan}
  \end{subfigure}
  \caption{Reward curves on two molecular optimization tasks. We use DeFoG~\cite{DeFoG} as the base model. As the number of oracle calls increases, the score of RL-optimized models gradually rises, while the base model remains almost unchanged. In the tasks with highly selective reward, \eg, Valsartan SMARTS, refining promising candidates is more effective than de novo generation.}
  \label{fig: case_hit}
  \vspace{-5pt}
\end{figure}

Recent advances in discrete generative modeling~\cite{D3PM, DFM} have given rise to a wide range of graph generation methods~\cite{DiGress, DisCo, Cometh}.
Among them, discrete flow matching-based graph generation methods, commonly referred to as graph flow models (GFMs), have recently attracted increasing attention.
By decoupling the training objective from sampling, GFMs enable more effective generative modeling and flexible inference~\cite{DeFoG}.

Despite significant progress, existing GFMs remain limited in satisfying complex human preferences or task-specific objectives.
For example, drug discovery requires generating small molecules with high binding affinity and low toxicity~\cite{drug}.
However, generating such molecules remains computationally expensive due to the vast space of generative models.
Recently, online reinforcement learning (RL)~\cite{PPO} has demonstrated significant promise in aligning generative models with human priors by maximizing specific reward functions.
While existing studies have explored the integration of RL with graph generative adversarial networks~\cite{GCPN} and graph diffusion models~\cite{GDPO}, extending RL to GFMs introduces two fundamental challenges:

\begin{itemize}[leftmargin=*, noitemsep, topsep=0pt, partopsep=0pt, parsep=3pt]
    \item Modern RL algorithms rely on policy gradients~\cite{policy} for optimization, which requires the policy model to be differentiable for the transition probability of each action. However, existing GFMs estimate the action probabilities via Monte Carlo sampling~\cite{bishop2006pattern}, breaking the gradient flows.
    \item RL exploration needs sufficient feedback to locate the task-specific region in the generative space. However, GFMs typically perform de novo generation, which can produce sparse reward signals, \ie, most generated graphs are invalid, leading to ineffective explorations.
    

    

\end{itemize}

To address these two challenges, we propose Graph-GRPO, an online RL framework that uses Group Relative Policy Optimization (GRPO)~\cite{GRPO} to align GFMs with task-specific objectives.
Specifically, we first derive an analytical expression of GFMs that explicitly connects model predictions to the action probability, which enables fully differentiable sampling and allows GFMs to be trained with modern RL frameworks.
Furthermore, we propose a refinement strategy to efficiently explore the potentials of promising samples.
For graphs with higher reward scores, we iteratively inject a small amount of noise and let GFMs regenerate clean graphs.
By repeating this process, Graph-GRPO progressively concentrates on high-potential regions of the generative space and improves generation quality over multiple refinement rounds.
Figure~\ref{fig: case_hit} illustrates the effectiveness of the proposed refinement strategy on two molecular optimization tasks. 
While de novo generation and refinement achieve comparable performance on the basic Scaffold Hopping task, a large performance gap emerges on the more challenging Valsartan SMARTS task, demonstrating the necessity and effectiveness of iterative refinement in complex optimization scenarios.
The contributions of this paper are summarized below:
\begin{itemize}[leftmargin=*, noitemsep, topsep=0pt, partopsep=0pt, parsep=3pt]
    \item We propose Graph-GRPO, an online RL framework that enables end-to-end RL training of GFMs by replacing the non-differentiable Monte Carlo sampling with an analytical transition probability.
    \item We introduce an iterative refinement strategy that refines high-reward samples through controlled perturbation and regeneration, enabling localized exploration of promising regions in the chemical space.
    \item Extensive experiments on synthetic graph benchmarks and molecular design tasks demonstrate the state-of-the-art performance of Graph-GRPO, outperforming existing RL-based and evolutionary approaches.
\end{itemize}

\section{Preliminary}
\label{sec:preliminary}

\textbf{Discrete State Graph.}
Let $G=\{x^{1:N}, e^{1:N^2}\}$ denote an undirected graph with $N$ nodes and $N^2$ edges, where $x^i \in \{1, \cdots, \mathcal{X}\}$ and $e^j \in \{1, \cdots, \mathcal{E} \}$ denote the discrete categories of nodes and edges, respectively.

Each graph is represented as $N + N^2$ dimensional data, and has $|\mathcal{X}|^N \times |\mathcal{E}|^{N^2}$ possibilities. It is intractable to model the transition probability between graphs, and we address this issue by factorizing a graph into independent nodes and edges~\cite{DDM}:
\begin{equation}
    p(G)=\prod_{i=1}^{N}p(x^i)\prod_{j=1}^{N^2}p(e^j).
\end{equation}
For brevity, we omit the index of nodes and edges and use a uniform notation $z$ to represent $x$ and $e$, where $\mathcal{Z}=\mathcal{X} \cup \mathcal{E}$.


\subsection{Discrete Flow Matching}

We consider a time-indexed stochastic process $\{ z_t \}_{t \in [0,1]}$ defined on $\mathcal Z$, where $p_t(z=z_t)$ denotes the marginal distribution of $z_t$ at time $t$. Discrete flow matching aims to construct a continuous probability path $\{p_t\}_{t\in[0,1]}$
connecting a prior distribution $p_0$ to the data distribution $p_1$.

\textbf{Noising Process ($t: 1 \rightarrow 0$).}
In the noising process, given a data state $z_1$, the conditional probability path is defined via linear interpolation:
\begin{equation}
    p_{t \mid 1}(z_t \mid z_1) = t \cdot \delta(z_t, z_1) + (1 - t) \cdot p_0(z_t),
\end{equation}
where $\delta(i,j)$ denotes the Kronecker delta function, \ie, $\delta(i,j)=1$ if $i=j$ and 0 otherwise.

\textbf{Denoising Process ($t: 0 \rightarrow 1$).}
The denoising process is modeled as a Continuous-Time Markov Chain (CTMC)
that evolves the distribution from prior $p_0$ to data $p_1$.
The transition probability from $z_t$ to $z_{t+\mathrm{d}t}$ is defined as:
\begin{equation}\label{eq: pi}
    p(z_{t+\mathrm{d}t}|z_t) =
    \begin{cases}
        R_t(z_t, z_{t+\mathrm{d}t}) \mathrm{d}t & \text{if } z_{t+\mathrm{d}t} \neq z_t, \\
        1 + R_t(z_t, z_t) \mathrm{d}t & \text{if } z_{t+\mathrm{d}t} = z_t,
    \end{cases}
\end{equation}
where $R_t$ is the rate matrix that determines whether to rest in the current state or jump to another state, and $R_t(z_t, z_t)=-\sum_{z_{t+\mathrm{d}t} \neq z_t}R_t(z_t, z_{t+\mathrm{d}t})$.
In the following, we denote $p(z_{t+\mathrm{d}t}|z_t)$ as the transition probability of the specific state, and use $p(\cdot|z_t)=[p(1|z_t), \cdots, p(\mathcal{|Z|} \mid z_t)]$ to represent the distribution of all states.




\subsection{Training and Sampling}

\textbf{Continuous-time Training.}
GFMs aim to learn a graph denoiser $f_{\theta}$ that takes the graphs with different noise levels as input and predicts the clean graphs, formulated as:
\begin{equation}
    \mathcal{L}=\mathbb{E}_{t\sim(0, 1), G_1 \sim p_1, G_t \sim p_{t|1}} \text{CE} \big(G_1, p_{\theta}(\cdot | G_t)\big),
\end{equation}
where $\text{CE}$ denotes the cross-entropy loss and $p_{\theta}(\cdot|G_t)$ is the probability distribution predicted by the denoiser $f_{\theta}(G_t, t)$.

\textbf{Discrete-time Sampling.}
The sampling process generates data from noise by simulating the trajectory of a CTMC using a finite time interval $\Delta t$.
Given the current discrete state $z_t$, the next state $z_{t+\Delta t}$ is sampled from the categorical distribution defined by the transition probabilities:
\begin{equation}
    z_{t+\Delta t} \sim \text{Categorical}\Big( p_{t+\Delta t | t}(\cdot | z_t) \Big),
\end{equation}
where the rate matrix is determined by the denoising model prediction $p_\theta$ and the initial prior distribution $p_0$.
This process is repeated iteratively from $t=0$ to $1$.




\begin{figure*}[t]
    \begin{minipage}[t]{0.49\textwidth}
        \begin{algorithm}[H]
        \small
            \caption{Conditional Transition (DeFoG)}
            \label{alg:defog_sampled}
            \begin{algorithmic}
                \newcommand{\AlgComment}[1]{\hfill \textcolor{gray}{$\triangleright$ #1}}
                
                \STATE {\bfseries Input:} Graph $G_t$, time $t$, interval $\Delta t$, denoiser $f_\theta$
                \STATE
                
                \STATE $p_\theta( \cdot | z_t) \gets \mathrm{Softmax}(f_\theta(G_t,t))$ \AlgComment{Model Prediction}
                \STATE
                
                \STATE $\hat{z}_1 \leftarrow \text{Categorical} \big( p_\theta( \cdot | z_t) \big) $ \AlgComment{Monte Carlo Sampling}
                \STATE $R_t(z_t, z_{t+\Delta t}) \gets R_t(z_t, z_{t+\Delta t}|\hat{z}_1)$ \AlgComment{Eq.~\ref{eq: CRM}}
                \STATE

                \STATE $p(z_{t+\Delta t}|z_t)=\delta(z_t, z_{t+\Delta t})+R_t(z_t, z_{t+\Delta t})\Delta t$ \AlgComment{Transition}
                \STATE
                
                \STATE {\bfseries Output:} $p_{t+\Delta t}=[p(1|z_t), \cdots, p(\mathcal{|Z|} \mid z_t)]$
            \end{algorithmic}
        \end{algorithm}
    \end{minipage}%
    \hfill
    \begin{minipage}[t]{0.49\textwidth}
        \begin{algorithm}[H]
        \small
            \caption{Analytic Transition (Graph-GRPO)}
            \label{alg:defog_analytic}
            \begin{algorithmic}
                \newcommand{\AlgComment}[1]{\hfill \textcolor{gray}{$\triangleright$ #1}}
                
                \STATE {\bfseries Input:} Graph $G_t$, time $t$, interval $\Delta t$, denoiser $f_\theta$
                \STATE
                
                \STATE $p_\theta( \cdot | z_t) \gets \mathrm{Softmax}(f_\theta(G_t,t))$  \AlgComment{Model Prediction}
                \STATE
                
                \STATE $R_t^\theta(z_t, z_{t+\Delta t}) \gets p_{\theta}(z_{t+\Delta t})V_1 + (1 - p_{\theta}(z_t) - p_{\theta}(z_{t+\Delta t}))V_2$ 
                
                \AlgComment{Analytic Rate Matrix, Eq~\ref{eq: ARM}}
                \STATE

                \STATE $p(z_{t+\Delta t}|z_t)=\delta(z_t, z_{t+\Delta t})+R_t^{\theta}(z_t, z_{t+\Delta t})\Delta t$ \AlgComment{Transition}
                \STATE
                
                \STATE {\bfseries Output:} $p_{t+\Delta t}=[p(1|z_t), \cdots, p(\mathcal{|Z|} \mid z_t)]$
            \end{algorithmic}
        \end{algorithm}
    \end{minipage}
\end{figure*}

\section{The Proposed Method}
\label{sec: method}

This section details the proposed method.
We begin by deriving the analytical transition probability for GFMs, and then introduce RL training and refinement in Graph-GRPO.

\subsection{Estimation of Rate Matrix}



In each denoising step of GFMs, we need to estimate a rate matrix $R_t$ to transform the graph from $G_t$ to $G_{t+\mathrm{d}t}$.
The marginal distribution $p_t$ and the rate matrix $R_t$ are connected by the Kolmogorov forward equation:
\begin{equation}
    \partial_t p_t = R_t^{\top}p_t.
\end{equation}
Satisfying the above condition is challenging because it is difficult to directly obtain the differentiation of $p_t$.
On the other hand, if we can access the real data $z_1$, it is possible to give the conditional differentiation:
\begin{equation}
    \partial_t p_{t \mid 1}(z_t \mid z_1) = \delta(z_t, z_1) - p_0(z_t).
\end{equation}

\textbf{Conditional Rate Matrix.}
Given a clean state $z_1$, \citet{co-design} gives a valid conditional rate matrix:
{\small
\begin{equation}\label{eq: CRM}
    R_t(z_t, z_{t+\mathrm{d}t} | z_1)
    =
    \frac{\mathrm{ReLU}\!\left[
        \partial_t p_{t\mid1}(z_{t+\mathrm{d}t} | z_1)
        -
        \partial_t p_{t\mid1}(z_t | z_1)
    \right]}
    {Z_t^{>0} \cdot p_{t\mid1}(z_t | z_1)},
\end{equation}}
where $Z^{>0}_t = \left|\{ z_t : p_{t|1}(z_t | z_1) > 0 \}\right|
$ denotes the number of states with non-zero probability.

Since the ground-truth state $z_1$ is unknown during generation, existing GFMs employ a Monte Carlo sampling to sample a pseudo-graph as an alternative to the real data.
As the number of samples increases, the conditional rate matrix gradually transforms into the unconditional rate matrix:
\begin{equation}
    R_t(z_t, z_{t+\mathrm{d}t})
    =
    \mathbb{E}_{\hat{z}_1 \sim p_\theta(\cdot | z_t)}
    \left[
        R_t(z_t, z_{t+\mathrm{d}t} | \hat{z}_1)
    \right],
\end{equation}
where $\hat{z}_1$ denotes the pseudo-state sampled from $p_\theta( \cdot | z_t)$. The transition process of GFMs is shown in Algorithm~\ref{alg:defog_sampled}.
Although effective, the conditional rate matrix is not suitable for RL training for two reasons:
\begin{itemize}[leftmargin=*, noitemsep, topsep=0pt, partopsep=0pt, parsep=3pt]
    \item \textbf{Non-differentiable transition probability.} The RL optimization relies on calculating a ratio between the old and new policy models, $r=\frac{\pi_{\theta}(G_{t+\mathrm{d}t}|G_t)}{\pi_{\text{old}}(G_{t+\mathrm{d}t}|G_t)}$, which requires the policy model $\pi$ to be fully differentiable. However, the conditional rate matrix relies on non-differentiable Monte Carlo sampling, which breaks the gradient flow.
    \item \textbf{Mismatch between training and inference.} Even though we can use some tricks, \eg, Gumbel-Softmax, to keep the gradient continuous, there is a more serious mismatch issue in the conditional rate matrix. That is, the new and old models may sample different pseudo-graphs, resulting in inconsistency between training and inference~\cite{inconsistency, li2026vision, li2026sponge, li2026vimu}.
\end{itemize}

\textbf{Analytical Rate Matrix.}
To enable stable RL training, we theoretically derive the analytical expression of the rate matrix, which is fully differentiable and aligns the training and inference processes.

\begin{proposition}\label{prop}
    Given the current state $z_t$, current time $t$, prior distribution $p_0$, and model prediction $p_{\theta}(\cdot|z_t)$, the off-diagonal entry of the analytic rate matrix is defined as:
    {\small
    \begin{gather}\label{eq: ARM}
        R_t^\theta(z_t,z_{t+\mathrm{d}t}) = p_{\theta}(z_{t+\mathrm{d}t})V_1 + (1 - p_{\theta}(z_t) - p_{\theta}(z_{t+\mathrm{d}t}))V_2, \\
        V_1 = \frac{1 \text{+} p_0(z_t) - p_0(z_{t+\mathrm{d}t})}{Z_t^{>0}(1-t)p_0(z_t)}, \
        V_2 = \frac{\mathrm{ReLU}(p_0(z_t) - p_0(z_{t+\mathrm{d}t}))}{Z_t^{>0}(1-t)p_0(z_t)}, \nonumber
    \end{gather}}
    where $V_1$ and $V_2$ are two statistics that can be pre-calculated before generation.
\end{proposition}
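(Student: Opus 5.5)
The plan is to start from the unconditional rate matrix written above as an expectation of the conditional rate matrix in Eq.~\eqref{eq: CRM} over $\hat z_1 \sim p_\theta(\cdot\mid z_t)$. Because $\hat z_1$ ranges over the finite set $\mathcal Z$, this expectation is an exact finite sum. I would evaluate it in closed form rather than by Monte Carlo:
\[
R_t^\theta(z_t,z_{t+\mathrm{d}t})=\sum_{z_1\in\mathcal Z} p_\theta(z_1)\, R_t(z_t,z_{t+\mathrm{d}t}\mid z_1).
\]
Throughout, write $i=z_t$ and $j=z_{t+\mathrm{d}t}$ with $i\neq j$, since we only treat off-diagonal entries. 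I would restrict to $t<1$ and assume the prior $p_0$ has full support on $\mathcal Z$. Then $p_{t\mid1}(z\mid z_1)=t\,\delta(z,z_1)+(1-t)p_0(z)>0$ for every $z$. Hence $Z_t^{>0}$ does not depend on $z_1$ and can be pulled out of the sum. I would state this assumption explicitly, since otherwise $Z_t^{>0}$ would vary with $\hat z_1$ and the clean formula would fail.

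Next I would substitute $\partial_t p_{t\mid1}(z\mid z_1)=\delta(z,z_1)-p_0(z)$ into Eq.~\eqref{eq: CRM} and split the sum into three cases according to $z_1$.

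\emph{Case $z_1=j$.} The numerator is $\mathrm{ReLU}[(1-p_0(j))+p_0(i)]=1+p_0(i)-p_0(j)$, which is nonnegative, so the ReLU drops. The denominator is $Z_t^{>0}(1-t)p_0(i)$, because $\delta(i,j)=0$. This term is exactly $V_1$, with weight $p_\theta(j)$.

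\emph{Case $z_1=i$.} The numerator is $\mathrm{ReLU}[-p_0(j)-1+p_0(i)]$. Since $p_0(i)-p_0(j)\le 1$, this is zero, so the term vanishes. This is convenient, because here the denominator would involve $t+(1-t)p_0(i)$ rather than $(1-t)p_0(i)$.

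\emph{Case $z_1=k\notin\{i,j\}$.} The numerator is $\mathrm{ReLU}[p_0(i)-p_0(j)]$ and the denominator is again $Z_t^{>0}(1-t)p_0(i)$, which gives $V_2$. Neither quantity depends on $k$. The total weight is therefore $\sum_{k\neq i,j}p_\theta(k)=1-p_\theta(i)-p_\theta(j)$.

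Adding the three cases gives Eq.~\eqref{eq: ARM}. Since $V_1$ and $V_2$ depend only on $p_0$, $t$ and the pair $(i,j)$, they can be precomputed. The result is linear in $p_\theta$ and hence differentiable in $\theta$.

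I expect the main obstacle to be bookkeeping rather than depth. The first issue is the normaliser $Z_t^{>0}$, which must be handled through the full-support assumption with $t<1$. The second is that the potentially awkward case $z_1=z_t$ must be shown to vanish identically via $p_0(i)-p_0(j)\le 1$. The diagonal entry then follows from $R_t(i,i)=-\sum_{j\neq i}R_t(i,j)$.
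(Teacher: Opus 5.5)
Your proposal is correct and follows the paper's derivation in Appendix~A essentially step for step. Like the paper, you expand the expectation over $\hat z_1$ as a finite sum, use full support of $p_0$ with $t<1$ to make $Z_t^{>0}$ constant, and split into the three cases $\hat z_1=z_t$, $\hat z_1=z_{t+\mathrm{d}t}$ and $\hat z_1\notin\{z_t,z_{t+\mathrm{d}t}\}$. Your explicit check that the $\hat z_1=z_t$ term vanishes, because $p_0(z_t)-p_0(z_{t+\mathrm{d}t})-1\le 0$, supplies a justification for a step the paper only asserts (its ``stability'' term).
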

\begin{proof}
    See Appendix~\ref{app:derivation}.
\end{proof}

Proposition~\ref{prop} states that the rate matrix can be directly estimated from the predictions of the denoiser by traversing all possibilities of the real data.
As a result, the denoiser can be optimized by the policy gradient, and the reward can be maximized.
The transition process of Graph-GRPO is shown in Algorithm~\ref{alg:defog_analytic}.





\begin{figure*}[t]
  \centering
  \includegraphics[width=\linewidth]{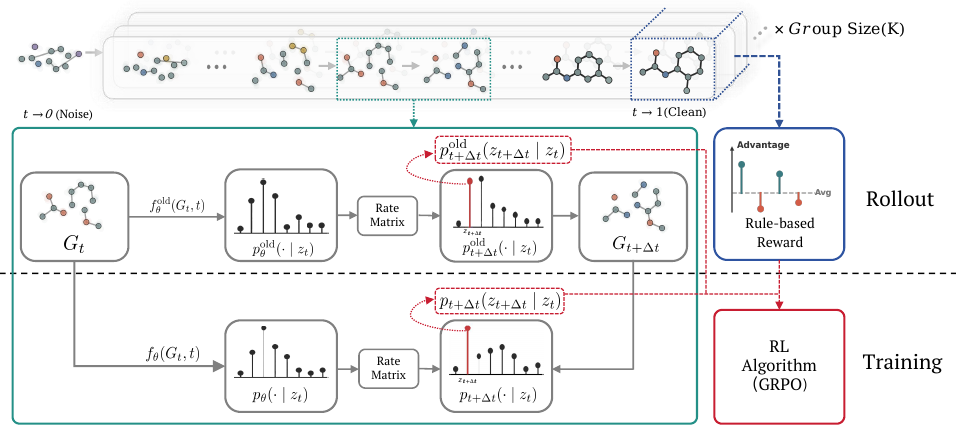}
  \caption{The overall framework of Graph-GRPO. (1) \textbf{Rollout}: given a noisy graph, the policy model samples $K$ denoising trajectories and caches the graph state $z_{t+\Delta t}$ along with its transition probability $p_{t+\Delta t}^{\text{old}}$. In the meantime, we normalize the reward scores to calculate the advantage of each trajectory. (2) \textbf{RL training}: we select a graph $G_t$ from the rollouts and use the new policy model to estimate the transition probabilities $p_{t+\Delta t}$. Graph-GRPO maximizes the rewards by optimizing the advantage-weighted ratio in Eq.~\ref{eq: grpo}.}
  \label{fig:method}
\end{figure*}

\subsection{Graph-GRPO}
\label{sec:Graph-GRPO}

In this section, we introduce the pipeline of Graph-GRPO. The framework is shown in Figure~\ref{fig:method}.

\textbf{Rollout Collection.}
A rollout refers to a trajectory $\tau$ generated by sequentially sampling actions from the policy model $\pi_{\theta}$~\cite{GRPO}, which can be formulated as:
\begin{equation}
    \tau = \{s_0, a_0, \cdots, s_T, a_T \},
\end{equation}
where $s$ denotes state, $a$ represents an action, and $T$ is the length of the trajectory.
In our context, the policy model is a pre-trained GFM, the state is the input graph, and the action samples the nodes and edges of the new graph.

To implement group-relative optimization, we first need to collect a set of trajectories starting from the same origin, \ie, a noisy graph $G_0 \sim p_0$.
Conditioned on the same noise graph, GFMs will generate a group of independent trajectories $\{ \tau^{(k)} \}_{k=1}^K$ in parallel, where $K$ denotes the group size.
During this process, we explicitly record the sampled trajectory $\tau^{(k)} = \{G_0, G_{\Delta t}^{(k)}, \cdots, G_{1 - \Delta t}^{(k)}, G_1^{(k)}\}$ along with the transition probability $p(G^{(k)}_{t+\Delta t}|G^{(k)}_t)$ at each step.

Upon reaching the terminal state $t=1$, we evaluate the final graph to compute the reward $R^{(k)} = \mathcal{R}(G_1^{(k)})$, where $\mathcal{R}$ denotes the task-specific reward functions, \eg, structural validity or docking scores.
Subsequently, we utilize the cached trajectory $\tau^{(k)}$, the sequence of step-wise transition probabilities, and the reward $R^{(k)}$ to compute the advantage and the importance sampling ratio for RL training.

\textbf{RL Training.}
The goal of RL is to learn an optimal policy that maximizes the expected reward function.
Given the pre-collected rollouts, we aim to learn a new policy model $\pi_{\theta}$ that achieves higher rewards than the old policy model $\pi_{\text{old}}$, which can be formulated as:
\begin{equation}
    \mathcal{J}(\theta)=\mathbb{E}_{\tau \sim \pi_{\theta}} \left[ R(\tau) \right]=\mathbb{E}_{\tau \sim \pi_{\text{old}}} \left[ \frac{p_{\theta}(\tau)}{p_{\text{old}}(\tau)}R(\tau) \right],
\end{equation}
where $\frac{p_{\theta}(\tau)}{p_{\text{old}}(\tau)}$ is the importance sampling that performs a change of measure between rollout distributions.

The objective function of Graph-GRPO is the same as the standard GRPO framework, which is defined as:
\begin{equation}\label{eq:ppo_obj}
\mathcal{J}(\theta) = \frac{1}{KT} \sum_{k=1}^{K} \sum_{t=1}^{T}
\left(
\mathcal{L}_t^{(k)}(\theta)
-
\beta D_{\text{KL}}(\pi_\theta \| \pi_{\text{ref}})
\right),
\end{equation}
where $\mathcal{L}_t^{(i)}(\theta)$ denotes the policy optimization in the $k$-th rollout and $t$-th denoising step, $\beta$ is a hyperparameter, and $D_{\text{KL}}$ is used to prevent the RL-optimized model $\pi_\theta$ from drastically deviating from the base model.
The loss of policy optimization is formulated as follows.
\begin{gather}\label{eq: grpo}
\mathcal{L}_t^{(k)}(\theta) = \min \left(r_{t, \theta}^{(k)} A^{(k)}, \operatorname{clip} (r_{t, \theta}^{(k)}, 1 \pm \epsilon) A^{(k)} \right),\\
A^{(k)} = \frac{R^{(k)} - \operatorname{mean}(\{R^{(i)}\})}{\operatorname{std}(\{R^{(i)}\})}, \
r_{t, \theta}^{(k)} = \frac{\pi_{\theta}(G_{t+\Delta t}^{(k)} | G_t^{(k)})}{\pi_{\text{old}}(G_{t+\Delta t}^{(k)} | G_t^{(k)})}, \nonumber
\end{gather}
where $A^{(k)}$ denotes the group-relative advantage and $r_{t, \theta}^{(k)}$ is the importance sampling ratio.
Moreover, to prevent reward hacking and preserve chemical validity, we use the KL divergence to penalize deviations:
\begin{equation}
    D_{\text{KL}}(\pi_\theta \| \pi_{\text{ref}}) = \pi_\theta(G^{(k)}_{t+\Delta t}|G^{(k)}_t) \log \frac{\pi_\theta(G^{(k)}_{t+\Delta t}|G^{(k)}_t)}{\pi_{\text{ref}}(G^{(k)}_{t+\Delta t}|G^{(k)}_t)},
\end{equation}
where $\pi_{\text{ref}}$ is the base model, \eg, a pre-trained DeFoG checkpoint without RL training.

\subsection{Refinement}
\label{sec:refinement}

\begin{figure}[t]
  \centering
  \includegraphics[width=\linewidth]{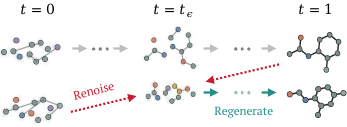}
  \caption{Refinement in Graph-GRPO. We first use GFMs to denoise a noise graph from $t=0$ to $t=1$. Subsequently, we re-noise the generated graph to time step $t_{\epsilon}$ and denoise it again using GFMs. This strategy explicitly increases the denoising steps of GFMs and improves the generation quality of Graph-GRPO.}
  \label{fig: refinement}
\end{figure}

In standard sampling workflows, GFMs aim to progressively transform the noisy graph $G_0$ into a clean graph $G_1$, which is a de novo generation process.
However, real-world applications require graphs with specific properties, which may only exist within a small region of the entire generative space.
In this case, de novo graph generation will result in invalid or low-quality graphs, which cannot effectively explore the high-potential region in the chemical space.



To address this issue, we propose a refinement strategy to explore the potential region by iteratively renoising and regenerating promising graphs, as shown in Figure ~\ref{fig: refinement}.
Specifically, we maintain a priority pool $\mathcal{B}$ to record graphs with top-$M$ reward scores.
In each iteration, these graphs undergo a refinement cycle, consisting of two parts:
\begin{itemize}[leftmargin=*, noitemsep, topsep=0pt, partopsep=0pt, parsep=3pt]
\item \textbf{Renoising.}
For a candidate graph $G_1 \in \mathcal{B}$, we revert it to an intermediate noisy state at time $t_\epsilon \in (0, 1)$ based on the conditional probability path:
\begin{equation}
    p_{t_\epsilon | 1}(z_{t_\epsilon} | z_1) = t_\epsilon \cdot \delta(z_{t_\epsilon}, z_1) + (1 - t_\epsilon) \cdot p_0(z_{t_\epsilon}).
\end{equation}
Here, $t_\epsilon$ controls the perturbation level: a larger $t_\epsilon$ means a smaller noise amplitude, thus enabling controlled exploration.
In practice, we typically sample multiple noisy graphs, $G_{t_\epsilon} \sim p(z_{t_\epsilon} | z_1)$, to explore the potential of $G_1$.
\item \textbf{Regeneration.}
Given a set of noisy graphs, we re-execute the denoising process of GFMs to generate more candidates.
Subsequently, we evaluate the reward score of new graphs, compare them with the graphs in the priority queue, and retain $M$ graphs with the highest scores.
\end{itemize}
More technical details can be seen in Appendix~\ref{app:dynamic_adjustment}.


\section{Experiment}

\begin{table}[t]
    \centering
    \caption{Graph generation performance on two synthetic datasets: Planar and Tree.
    Results are reported as mean $\pm$ standard deviation over five sampling runs,
    each generating 40 graphs.}
    \label{tab:synthetic_graphs_vun_ratio}

    \resizebox{\linewidth}{!}{
    \begin{tabular}{lccccc}
        \toprule
        \multirow{2}[2]{*}{\textbf{Model}} & \multirow{2}[2]{*}{\textbf{Step}} & \multicolumn{2}{c}{\textbf{Planar}} & \multicolumn{2}{c}{\textbf{Tree}} \\
        \cmidrule(lr){3-4} \cmidrule(lr){5-6}
        & & V.U.N.\,$\uparrow$ & Ratio\,$\downarrow$ & V.U.N.\,$\uparrow$ & Ratio\,$\downarrow$ \\
        \midrule
        Train set & - & 100 & 1.0 & 100 & 1.0 \\
        \midrule
        GraphRNN & - & 0.0 & 490.2 & 0.0 & 607.0 \\
        BiGG     & - & 5.0 & 16.0 & 75.0 & 5.2 \\
        GraphGen & - & 7.5 & 210.3 & 95.0 & 33.2 \\
        CatFlow  & - & 80.0 & - & - & - \\
        DiGress  & 1,000 & 77.5 & 5.1 & 90.0 & \textbf{1.6} \\
        GBD      & 1,000 & 92.5 & 7.7 & - & - \\
        DisCo    & 1,000 & 83.6\sd{2.1} & - & - & - \\
        GDPO     & 1,000 & 73.8\sd{2.5} & - & - & - \\

        DeFoG    & 50 & \textbf{95.0}\sd{3.2} & 3.2\sd{1.1} & 73.5\sd{9.0} & 2.5\sd{1.0} \\    
        \rowcolor{gray!12}
        Graph-GRPO & 50 & \textbf{95.0}\sd{4.0} & \textbf{1.5}\sd{0.5} & \textbf{97.5}\sd{1.6} & 2.2\sd{0.8} \\

        

        \bottomrule
    \end{tabular}}
\end{table}

\begin{table*}[t]
\caption{Experimental results on the protein docking task on ZINC250k~\cite{zinc}.
Baseline results are taken from GDPO~\cite{GDPO}. We use the same benchmark evaluation protocol for a fair comparison. \textbf{The best results are highlighted in bold}.}
\label{tab:Protein Docking-Based Optimization}
\centering

\resizebox{\textwidth}{!}{
\begin{tabular}{ll >{\columncolor{gray!10}}c ccccc ccc}
\toprule
\multirow{2}{*}{\textbf{Target}}
& \multirow{2}{*}{\textbf{Metric}}
& \multicolumn{6}{c}{\textbf{RL-optimized Generative Models}}
& \multicolumn{3}{c}{\textbf{Basic Generative Models}} \\
\cmidrule(lr){3-8}
\cmidrule(lr){9-11}
&  & \textbf{Graph-GRPO}
& GDPO & DDPO & FREED & REINVENT & GCPN
& DiGress-G & DiGress & MOOD \\
\midrule

\multirow{2}{*}{\textit{parp1}} 
 & DS (top 5\%) $\downarrow$
 & \textbf{-12.515}\sd{0.024}
 & -10.938\sd{0.042}
 & -9.247\sd{0.242}
 & -10.579\sd{0.104}
 & -8.702\sd{0.523}
 & -8.102\sd{0.105}
 & -9.463\sd{0.524}
 & -9.219\sd{0.078}
 & -10.865\sd{0.113} \\
 & Hit Ratio $\uparrow$
 & \textbf{60.763}\sd{0.471}
 & 9.814\sd{1.352}
 & 0.419\sd{0.280}
 & 4.627\sd{0.727}
 & 0.480\sd{0.344}
 & 0
 & 1.172\sd{0.672}
 & 0.366\sd{0.146}
 & 7.017\sd{0.428} \\
\midrule

\multirow{2}{*}{\textit{fa7}} 
 & DS (top 5\%) $\downarrow$
 & \textbf{-9.099}\sd{0.053}
 & -8.691\sd{0.074}
 & -7.739\sd{0.244}
 & -8.378\sd{0.044}
 & -7.205\sd{0.264}
 & -6.688\sd{0.186}
 & -7.318\sd{0.213}
 & -7.736\sd{0.156}
 & -8.160\sd{0.071} \\
 & Hit Ratio $\uparrow$
 & \textbf{9.367}\sd{0.753}
 & 3.449\sd{0.188}
 & 0.342\sd{0.685}
 & 1.332\sd{0.113}
 & 0.213\sd{0.081}
 & 0
 & 0.321\sd{0.370}
 & 0.182\sd{0.232}
 & 0.733\sd{0.141} \\
\midrule

\multirow{2}{*}{\textit{5ht1b}} 
 & DS (top 5\%) $\downarrow$
 & \textbf{-11.399}\sd{0.026}
 & -11.304\sd{0.093}
 & -9.488\sd{0.287}
 & -10.714\sd{0.183}
 & -8.770\sd{0.316}
 & -8.544\sd{0.505}
 & -8.971\sd{0.395}
 & -9.280\sd{0.198}
 & -11.145\sd{0.042} \\
 & Hit Ratio $\uparrow$
 & \textbf{46.634}\sd{1.042}
 & 34.359\sd{2.734}
 & 5.488\sd{1.989}
 & 16.767\sd{0.897}
 & 2.453\sd{0.561}
 & 1.455\sd{1.173}
 & 2.821\sd{1.140}
 & 4.236\sd{0.887}
 & 18.673\sd{0.423} \\
\midrule

\multirow{2}{*}{\textit{braf}}  
 & DS (top 5\%) $\downarrow$
 & -11.141\sd{0.010}
 & \textbf{-11.197}\sd{0.132}
 & -9.470\sd{0.373}
 & -10.561\sd{0.080}
 & -8.392\sd{0.400}
 & -8.713\sd{0.155}
 & -8.825\sd{0.459}
 & -9.052\sd{0.044}
 & -11.063\sd{0.034} \\
 & Hit Ratio $\uparrow$
 & \textbf{10.041}\sd{0.496}
 & 9.039\sd{1.473}
 & 0.445\sd{0.297}
 & 2.940\sd{0.359}
 & 0.127\sd{0.088}
 & 0
 & 0.152\sd{0.303}
 & 0.122\sd{0.141}
 & 5.240\sd{0.285} \\
\midrule

\multirow{2}{*}{\textit{jak2}} 
 & DS (top 5\%) $\downarrow$
 & \textbf{-11.123}\sd{0.025}
 & -10.183\sd{0.124}
 & -8.990\sd{0.221}
 & -9.735\sd{0.022}
 & -8.165\sd{0.277}
 & -8.073\sd{0.093}
 & -8.360\sd{0.217}
 & -8.706\sd{0.222}
 & -10.147\sd{0.060} \\
 & Hit Ratio $\uparrow$
 & \textbf{52.897}\sd{0.245}
 & 13.405\sd{1.515}
 & 1.717\sd{0.684}
 & 5.800\sd{0.295}
 & 0.613\sd{0.167}
 & 0
 & 0.311\sd{0.621}
 & 0.861\sd{0.332}
 & 9.200\sd{0.524} \\
\bottomrule
\end{tabular}}
\end{table*}

\subsection{Experimental Setup}
Our framework builds upon DeFoG~\cite{DeFoG}, which employs a graph Transformer equipped with Relative Random Walk Probabilities (RRWP) ~\cite{RRWP1, Cometh} as the denoising network.
We benchmark the graph generation methods on three tasks: general graph generation on synthetic datasets, molecular optimization for protein docking, and target property optimization.
The base model is pretrained on specific datasets for general graph generation and on ZINC250k~\cite{zinc} for molecular optimization.
If a checkpoint is provided, we download it from the repository\footnote{\url{https://github.com/manuelmlmadeira/DeFoG}}; otherwise, we train DeFoG by ourselves.
See Appendix~\ref{app:implementation} for more implementation details.




\subsection{General Graph Generation}
\label{sec:General Graph Generation on Synthetic Datasets}

\textbf{Setup.}
We evaluate general graph generation on two synthetic benchmarks:
\textit{Planar} and \textit{Tree}~\cite{tree}.
Both datasets consist of graphs with a fixed number of 64 nodes.
Each dataset is split into 128 training graphs, 32 validation graphs, and 40 test graphs. At test time, we conduct five evaluations, generating 40 graphs each time, and report the mean and standard deviation.
The generation quality is reflected in two metrics: Valid-Unique-Novel (V.U.N.) and the ratio over the training set.


\textbf{Reward Design.}
We design the reward function as a combination of a hard validity constraint and a sum of the soft distribution matching metrics:
\begin{equation}
    R(G) = \mathbb{I}(G) \cdot \left( \alpha + \frac{1 - \alpha}{|\mathcal{K}|} \sum_{k \in \mathcal{K}} S_k \right),
\end{equation}
where $\mathbb{I}(G)=1$ if $G$ is valid, otherwise 0.
The parameter $\alpha \in (0, 1)$ represents a base reward allocated for valid graph, while the remaining portion $(1 - \alpha)$ serves as a quality bonus for structural similarities, \ie, $\mathcal{K} = \{\text{deg}, \text{clus}, \text{orb}\}$, between the generated graphs and training graphs.
In our experiments, we set $\alpha = 0.65$ to prioritize validity while reserving significant capacity for structural refinement.

\textbf{Result.}
Table~\ref{tab:synthetic_graphs_vun_ratio} presents the graph generation results on two synthetic datasets.
In the Planar dataset, Graph-GRPO has a lower ratio than the base model, demonstrating better structural alignment without degrading the V.U.N. score.
In the Tree dataset, the improvement is even more pronounced:
Graph-GRPO boosts V.U.N. from 73.5\% to 97.5\% while simultaneously reducing the ratio.
Moreover, with only 50 denoising steps, Graph-GRPO outperforms graph diffusion models with 1,000 steps, \eg, DiGress~\cite{DiGress}, GBD~\cite{GBD}, and DisCo~\cite{DisCo}, as well as the graph diffusion policy optimization (GDPO)~\cite{GDPO}, validating the effectiveness of our framework. More baselines and results are provided in Appendix ~\ref{app:synthetic_graph_generation}.


\begin{table*}[t]
\centering
\caption{
Target property optimization results on the PMO benchmark~\cite{PMO} (AUC-top10).
Baseline results are taken from InVirtuoGen~\cite{InVirtuoGen} and GenMol~\cite{Genmol}.
Results are shown for both prescreening and cold-start settings,
\textbf{with best scores highlighted in bold}.}
\label{tab:pmo_noprescreen_graphrl_highlight}
\resizebox{\linewidth}{!}{
\begin{tabular}{l >{\columncolor{gray!10}}c ccc >{\columncolor{gray!10}}c cccc}
\toprule
\multirow{2}[2]{*}{\textbf{Oracle}} & \multicolumn{4}{c}{\textbf{w/ Prescreening}} & \multicolumn{5}{c}{\textbf{Cold-Start (w/o Prescreening)}} \\
\cmidrule(lr){2-5} \cmidrule(lr){6-10} & \textbf{Graph-GRPO} & \textbf{InVirtuoGen} &  \textbf{GenMol} & \textbf{f-RAG} & \textbf{Graph-GRPO} & \textbf{InVirtuoGen} & \textbf{Gen. GFN} & \textbf{Mol GA} & \textbf{REINVENT} \\
\midrule
Albuterol Similarity   & \textbf{0.994}\sd{0.000} & 0.975\sd{0.016} & 0.937\sd{0.010} & 0.977\sd{0.002} & \textbf{0.994}\sd{0.000} & 0.950\sd{0.017} & 0.949\sd{0.010} & 0.896\sd{0.035} & 0.882\sd{0.006} \\
Amlodipine MPO         & 0.823\sd{0.008} & \textbf{0.836}\sd{0.031} & 0.810\sd{0.012} & 0.749\sd{0.019} & \textbf{0.823}\sd{0.008} & 0.733\sd{0.043} & 0.761\sd{0.019} & 0.688\sd{0.039} & 0.635\sd{0.035} \\
Celecoxib Rediscovery  & \textbf{0.890}\sd{0.000} & 0.839\sd{0.013} & 0.826\sd{0.018} & 0.778\sd{0.007} & \textbf{0.890}\sd{0.000} & 0.798\sd{0.028} & 0.802\sd{0.029} & 0.567\sd{0.083} & 0.713\sd{0.067} \\
Deco Hop               & 0.942\sd{0.005} & \textbf{0.968}\sd{0.012} & 0.960\sd{0.010} & 0.936\sd{0.011} & \textbf{0.762}\sd{0.127} & 0.748\sd{0.109} & 0.733\sd{0.109} & 0.649\sd{0.025} & 0.666\sd{0.044} \\
DRD2                   & \textbf{0.995}\sd{0.000} & \textbf{0.995}\sd{0.000} & \textbf{0.995}\sd{0.000} & 0.992\sd{0.000} & \textbf{0.995}\sd{0.000} & 0.985\sd{0.002} & 0.974\sd{0.006} & 0.936\sd{0.016} & 0.945\sd{0.007} \\
Fexofenadine MPO       & \textbf{0.984}\sd{0.001} & 0.904\sd{0.000} & 0.894\sd{0.028} & 0.856\sd{0.016} & \textbf{0.984}\sd{0.001} & 0.845\sd{0.016} & 0.856\sd{0.039} & 0.825\sd{0.019} & 0.784\sd{0.006} \\
GSK3b                  & 0.948\sd{0.015} & \textbf{0.988}\sd{0.001} & 0.986\sd{0.003} & 0.969\sd{0.003}& \textbf{0.965}\sd{0.006} & 0.952\sd{0.016} & 0.881\sd{0.042} & 0.843\sd{0.039} & 0.865\sd{0.043} \\
Isomers C7H8N2O2       & \textbf{0.995}\sd{0.000} & 0.988\sd{0.002} & 0.942\sd{0.004} & 0.955\sd{0.008} & \textbf{0.995}\sd{0.000} & 0.968\sd{0.005} & 0.969\sd{0.003} & 0.878\sd{0.026} & 0.852\sd{0.036} \\
Isomers C9H10N2O2PF2Cl & \textbf{0.932}\sd{0.011} & 0.898\sd{0.018} & 0.833\sd{0.014} & 0.850\sd{0.005} & \textbf{0.932}\sd{0.011} & 0.874\sd{0.013} & 0.897\sd{0.007} & 0.865\sd{0.012} & 0.642\sd{0.054} \\
JNK3                   & \textbf{0.910}\sd{0.019} & 0.898\sd{0.031} & 0.906\sd{0.023} & 0.904\sd{0.004} & \textbf{0.910}\sd{0.019} & 0.825\sd{0.016} & 0.764\sd{0.069} & 0.702\sd{0.123} & 0.783\sd{0.023} \\
Median 1               & 0.388\sd{0.000} & 0.386\sd{0.003} & \textbf{0.398}\sd{0.000} & 0.340\sd{0.007} & \textbf{0.388}\sd{0.000} & 0.342\sd{0.008} & 0.379\sd{0.010} & 0.257\sd{0.009} & 0.356\sd{0.009} \\
Median 2               & 0.322\sd{0.039} & \textbf{0.377}\sd{0.006} & 0.359\sd{0.004} & 0.323\sd{0.005} & 0.300\sd{0.002} & 0.288\sd{0.008} & 0.294\sd{0.007} & \textbf{0.301}\sd{0.021} & 0.276\sd{0.008} \\
Mestranol Similarity   & 0.980\sd{0.002} & \textbf{0.991}\sd{0.002} & 0.982\sd{0.000} & 0.671\sd{0.021} & \textbf{0.974}\sd{0.002} & 0.797\sd{0.033} & 0.708\sd{0.057} & 0.591\sd{0.053} & 0.618\sd{0.048} \\
Osimertinib MPO        & \textbf{0.924}\sd{0.003} & 0.881\sd{0.012} & 0.876\sd{0.008} & 0.866\sd{0.009} & \textbf{0.922}\sd{0.002} & 0.870\sd{0.005} & 0.860\sd{0.008} & 0.844\sd{0.015} & 0.837\sd{0.009} \\
Perindopril MPO        & 0.690\sd{0.033} & \textbf{0.753}\sd{0.019} & 0.718\sd{0.012} & 0.681\sd{0.017} & \textbf{0.689}\sd{0.036} & 0.645\sd{0.032} & 0.595\sd{0.014} & 0.547\sd{0.022} & 0.537\sd{0.016} \\
QED                    & \textbf{0.944}\sd{0.000} & 0.943\sd{0.000} & 0.942\sd{0.000} & 0.939\sd{0.001} & \textbf{0.944}\sd{0.000} & 0.942\sd{0.000} & 0.942\sd{0.000} & 0.941\sd{0.001} & 0.941\sd{0.000} \\
Ranolazine MPO         & \textbf{0.928}\sd{0.001} & 0.854\sd{0.012} & 0.821\sd{0.011} & 0.820\sd{0.016} & \textbf{0.928}\sd{0.001} & 0.848\sd{0.010} & 0.819\sd{0.018} & 0.804\sd{0.011} & 0.760\sd{0.009} \\
Scaffold Hop           & \textbf{0.711}\sd{0.113} & \textbf{0.711}\sd{0.081} & 0.628\sd{0.008} & 0.576\sd{0.014} & \textbf{0.622}\sd{0.015} & 0.589\sd{0.021} & 0.615\sd{0.100} & 0.527\sd{0.025} & 0.560\sd{0.019} \\
Sitagliptin MPO        & \textbf{0.879}\sd{0.014} & 0.743\sd{0.022} & 0.584\sd{0.034} & 0.601\sd{0.011} & \textbf{0.879}\sd{0.014} & 0.709\sd{0.029} & 0.634\sd{0.039} & 0.582\sd{0.040} & 0.021\sd{0.003} \\
Thiothixene Rediscovery& \textbf{0.842}\sd{0.001} & 0.652\sd{0.024} & 0.692\sd{0.123} & 0.584\sd{0.009} & \textbf{0.842}\sd{0.001} & 0.625\sd{0.014} & 0.583\sd{0.034} & 0.519\sd{0.041} & 0.534\sd{0.013} \\
Troglitazone Rediscovery& 0.711\sd{0.008} & 0.853\sd{0.003} & \textbf{0.867}\sd{0.022} & 0.448\sd{0.017} & \textbf{0.711}\sd{0.008} & 0.595\sd{0.053} & 0.511\sd{0.054} & 0.427\sd{0.031} & 0.441\sd{0.032} \\
Valsartan SMARTS       & 0.841\sd{0.019} & \textbf{0.935}\sd{0.012} & 0.822\sd{0.042} & 0.627\sd{0.058} & \textbf{0.841}\sd{0.019} & 0.210\sd{0.297} & 0.135\sd{0.271} & 0.000\sd{0.000} & 0.178\sd{0.358} \\
Zaleplon MPO           & \textbf{0.697}\sd{0.004} & 0.624\sd{0.040} & 0.584\sd{0.011} & 0.486\sd{0.004} & \textbf{0.697}\sd{0.004} & 0.536\sd{0.006} & 0.552\sd{0.033} & 0.519\sd{0.029} & 0.358\sd{0.062} \\
\midrule
\textbf{Sum (AUC-top10)}     & \textbf{19.270} & 18.993 & 18.362 & 16.928 & \textbf{18.987} & 16.676 & 16.213 & 14.708 & 14.184 \\
\bottomrule
\end{tabular}}
\end{table*}

\subsection{Protein Docking}
\label{sec:Protein Docking-Based Optimization}

\textbf{Setup.}
We further evaluate graph generation methods on the protein docking application, which aims to generate molecules that dock to specific proteins, and possess desirable properties, such as novelty (\textit{NOV}), drug-likeness (\textit{QED}), and synthetic accessibility (\textit{SA}).
We follow the setup in GDPO~\cite{GDPO}, which uses the hit ratio (\%) and the top-5\% docking score (DS) as metrics, and considers five target proteins:
\textit{parp1}, \textit{fa7}, \textit{5ht1b}, \textit{braf} and \textit{jak2}.
Hit ratio is defined as the proportion of effective molecules with \textit{NOV} > 0.6, \textit{QED} > 0.5, and \textit{SA} < 5.
Docking score indicates the negative binding affinity with specific proteins.
We compare Graph-GRPO against several representative baselines,
including RL-optimized models, \eg, GDPO~\cite{GDPO}, DDPO~\cite{DDPO}, FREED~\cite{Freed}, REINVENT~\cite{REINVENT}, and GCPN~\cite{GCPN}, and basic graph generative models, \eg, DiGress~\cite{DiGress} with and without guidance, and MOOD~\cite{MOOD}.



\textbf{Reward Design.}
We adopt the same reward as GDPO, which uses a composite reward function to balance binding affinity, chemical quality, and novelty, defined as
\begin{equation}
    R(G) = 0.1 \cdot (R_{\text{QED}} + R_{\text{SA}}) + 0.3 \cdot R_{\text{Nov}} + 0.5 \cdot R_{\text{DS}}.
\end{equation}
Further details are provided in Appendix~\ref{app:protein_docking_rewards}.



\textbf{Result.}
Table~\ref{tab:Protein Docking-Based Optimization} presents results on the protein docking task.
We have two observations: First, Graph-GRPO obtains the optimal or sub-optimal docking scores on the five target proteins, demonstrating its effectiveness in synthesizing molecules with high binding affinity.
Second, Graph-GRPO exhibits better sampling efficiency, reflected in its higher hit ratio compared to other methods.
For example, given the \textit{parp1} protein, molecules generated by Graph-GRPO have a 60\% hit ratio, which is 6$\times$ higher than the best baseline GDPO.
This result indicates that Graph-GRPO can effectively align GFMs with task-specific objectives and efficiently explore the high-potential region in the chemical space.
As we limit the models to generate up to 2,048 molecules, a higher hit ratio also benefits the docking score because it uses more candidates for evaluation. Detailed experimental results are provided in Appendix ~\ref{app:protein_docking_metrics}.



\begin{table}[t]
    \centering
    \caption{Ablation studies on the PMO benchmark. We report the averaged AUC-top10 over all 23 tasks. All oracle calls used by refinement are counted under the same 10k budget.}
    \label{tab:ablation_study}
    \footnotesize
    \resizebox{\linewidth}{!}{
    \begin{tabular}{lccccc}
        \toprule
        \textbf{Method} 
        & \textbf{RL} 
        & \makecell{\textbf{Refine-}\\\textbf{ment}} 
        & \makecell{\textbf{Pre-}\\\textbf{screening}} 
        & \makecell{\textbf{Dynamic}\\\textbf{Prior}} 
        & \makecell{\textbf{AUC-top10}} \\
        \midrule
        DeFoG      & - & - & - & - & 11.079 \\
        DeFoG      & - & \ding{51} & - & - & 15.251 \\
        DeFoG      & - & \ding{51} & \ding{51} & - & 15.887 \\
        \midrule
        Graph-GRPO & \ding{51} & - & - & - & 16.901 \\
        Graph-GRPO & \ding{51} & - & - & \ding{51} & 17.450 \\
        Graph-GRPO & \ding{51} & \ding{51} & - & - & 17.950 \\
        Graph-GRPO & \ding{51} & \ding{51} & - & \ding{51} & 18.987 \\
        Graph-GRPO & \ding{51} & \ding{51} & \ding{51} & - & 18.033 \\
        \rowcolor{gray!10}
        Graph-GRPO & \ding{51} & \ding{51} & \ding{51} & \ding{51} & \textbf{19.270} \\
        \bottomrule
    \end{tabular}}
\end{table}



\subsection{Target Property Optimization}
\label{sec:Target Property Generation}

\textbf{Setup.}
We finally use the Practical Molecular Optimization (PMO) benchmark~\cite{PMO} to evaluate the ability to generate molecules with target properties.
The PMO benchmark has 23 diverse tasks and evaluates the generated molecules under a strict budget of 10,000 oracle calls.
We consider two experimental setups:
(1) \textbf{Prescreening}, where a high-quality initial pool is first constructed by prescreening the ZINC250k dataset. This process will consume 250k oracle calls, but it provides a good starting point.
(2) \textbf{Cold-Start (w/o Prescreening)}, which uses de novo generation without any prior candidate pool.
We compare Graph-GRPO against several baselines, including fragment-based, \eg, InVirtuoGen~\cite{InVirtuoGen}, GenMol~\cite{Genmol}, REINVENT~\cite{REINVENT}, Genetic GFN~\cite{GeneticGFN}, and f-RAG~\cite{f-RAG}, and graph-based, \eg, Mol GA~\cite{MolGA}.
It is worth noting that Graph-GRPO builds upon DeFoG as the base model, which is pretrained solely on the ZINC250k dataset. In contrast, some baseline methods, such as InVirtuoGen and GenMol, are pretrained on significantly larger datasets~\cite{zinc, unichem} containing over one billion molecules.



\textbf{Reward Design.}
We directly adopt the standard oracle scores defined by the benchmark protocol as our reward.
Implementation details are provided in Appendix~\ref{app:target_property_rewards}.

\textbf{Results.}
Table~\ref{tab:pmo_noprescreen_graphrl_highlight} presents the results on PMO benchmark.
In the cold-start setting, Graph-GRPO outperforms baselines by a clear margin and matches the performance of methods that rely on expensive prescreening.
The improvement over baselines mainly comes from some hard tasks, such as Thiothixene Rediscovery and Troglitazone Rediscovery, indicating that Graph-GRPO can effectively explore the high-potential regions in chemical space, even without any prior knowledge.
In the prescreening setting, Graph-GRPO further advances its performance to a new state-of-the-art with an AUC of 19.270, validating the effectiveness of both prescreening and refinement.
Notably, all oracle calls during refinement are fully counted toward the total budget. See a detailed explanation in Appendix~\ref{app:training_config}. 



\begin{figure*}[t]
    \centering
    \includegraphics[width=\linewidth]{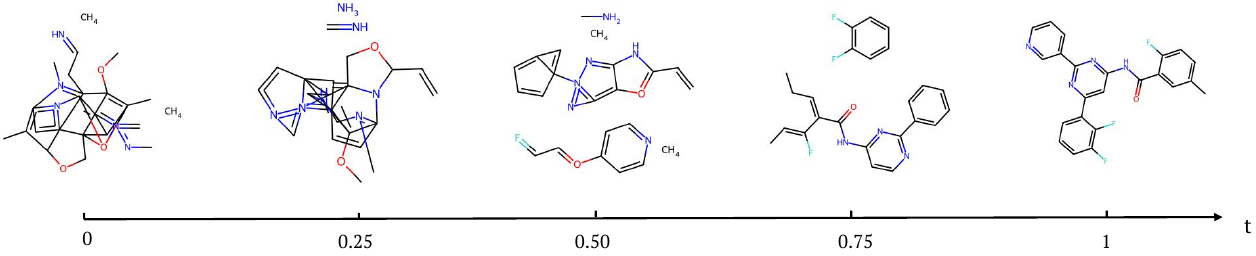}
    \caption{Visualization of the sampling trajectory. From left to right, the graphs $G_t$ move from a prior distribution ($t=0$) to the data distribution ($t=1$). Visualizations of the predicted clean state $\hat{z}_1$ at the corresponding time steps are provided in Figure~\ref{fig:vis_pred_z1}.}
    \label{fig:trajectory_simple}
\end{figure*}

\begin{figure}[t]
  \centering
  \begin{subfigure}[b]{0.49\linewidth}
    \centering
    \includegraphics[width=\linewidth]{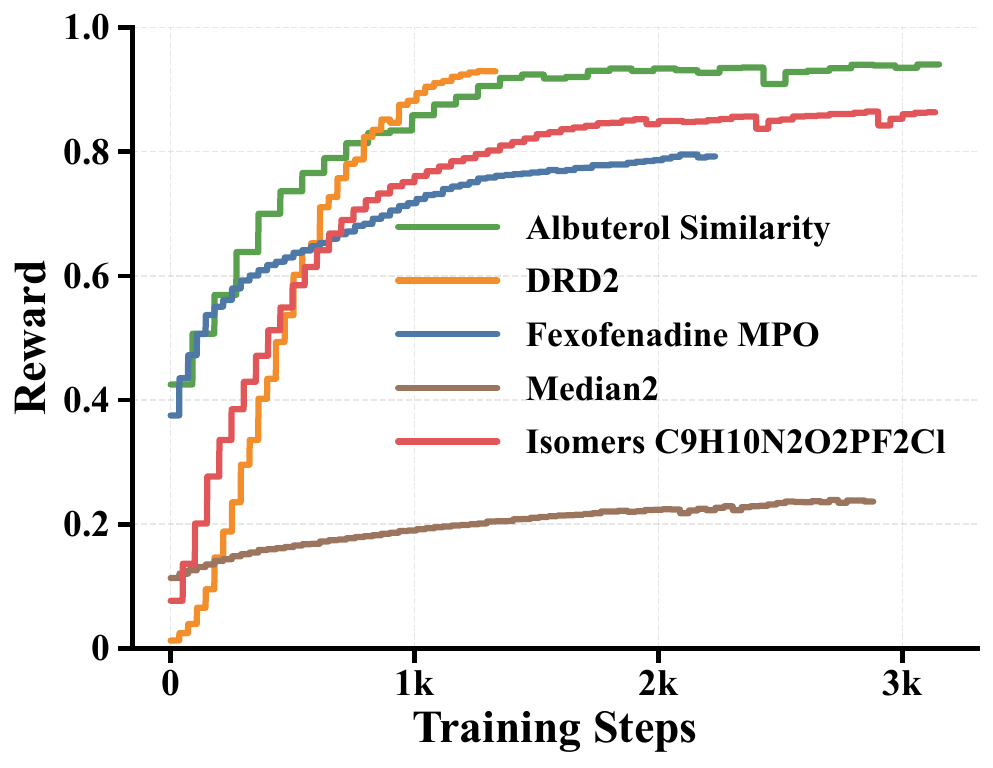}
    \caption{Training}
    \label{fig:reward}
  \end{subfigure}
  \hfill
  \begin{subfigure}[b]{0.49\linewidth}
    \centering
    \includegraphics[width=\linewidth]{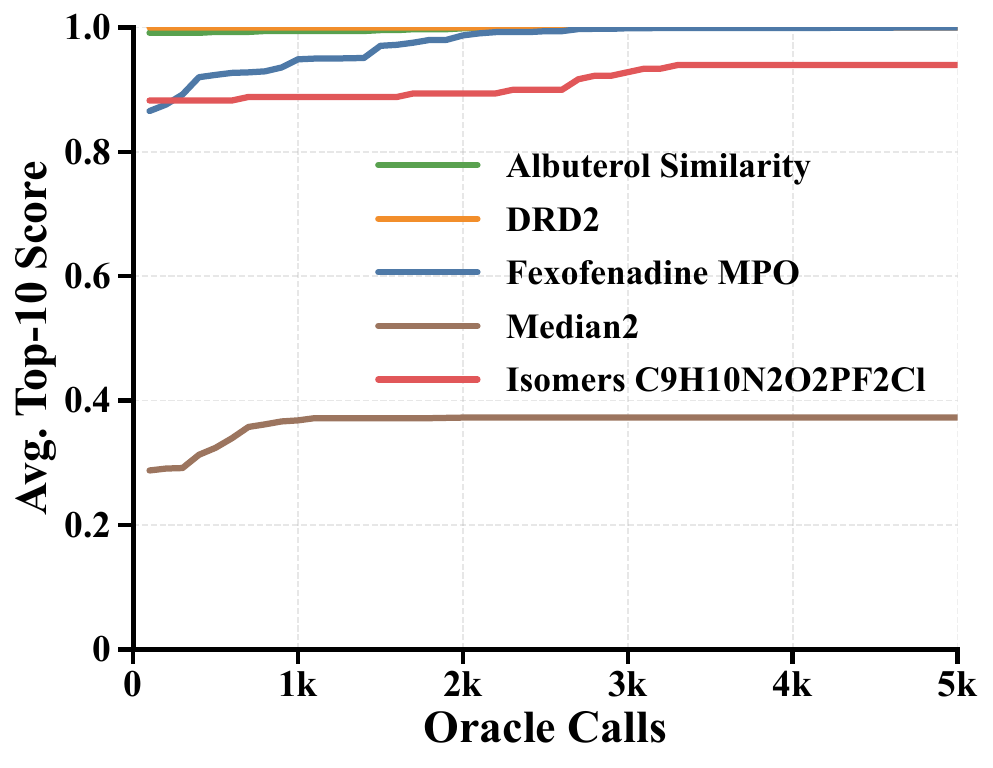}
    \caption{Evaluation}
    \label{fig:score}
  \end{subfigure}
  \caption{Visualization of training and evaluation curves in Graph-GRPO. We select five representative tasks from the PMO benchmark. (a) Training reward curves plotted against training steps. (b) Average top-10 scores plotted against the number of oracle calls. For brevity, we only visualize the first 5,000 oracle calls.}
\end{figure}





\subsection{Ablation Study}
\label{sec:Ablation study}

We conduct a full component ablation on the PMO benchmark to study the effects of components in Graph-GRPO: RL training, refinement, prescreening, and dynamic prior.
As shown in Table~\ref{tab:ablation_study}, all variants are evaluated by the averaged AUC-top10 over 23 PMO tasks under the same 10k oracle-call budget.

RL training is the dominant contributor, improving the AUC-top10 from 11.079 to 16.901 by optimizing the task-specific reward.
Refinement brings substantial gains both with and without RL, showing the benefit of iterative renoising and regeneration in exploring high-potential regions.
Prescreening provides a smaller but consistent improvement, since RL and refinement already guide the model toward high-reward candidates.
Dynamic prior further improves performance by adapting the sampling prior to high-scoring regions, and the full Graph-GRPO achieves the best AUC-top10 of 19.270.

To further examine the influence of factors beyond the main component ablation, we provide detailed studies in the appendix.
Specifically, we analyze the effect of the renoising time $t_{\epsilon}$ in Appendix~\ref{app:t_epsilon_ablation}, compare Graph-GRPO with a REINFORCE variant without analytic transition in Appendix~\ref{app:reinforce_ablation}, and evaluate the scalability of Graph-GRPO under different graph sizes and denoising steps in Appendix~\ref{app:scalability}.

\subsection{Visualization}
Figure \ref{fig:trajectory_simple} visualizes the denoising trajectory on the Deco Hop task, illustrating how Graph-GRPO progressively transforms an initially disordered and noisy graph into a chemically valid, high-quality molecule that satisfies pharmacophore constraints. Figure~\ref{fig:reward} highlights the stability of the training process, where the reward score gradually increases with the number of training steps.
Moreover, Figure~\ref{fig:score} shows the average top-10 scores with the number of oracle calls in the evaluation stage. It can be observed that the scores are higher than the training stage, indicating the effectiveness of the refinement strategy.



\section{Related Work}

\textbf{Graph Generative Models.}
Graph generation can be roughly divided into autoregressive, one-shot, and diffusion or flow-based models.
Autoregressive models construct graphs sequentially
(e.g., GraphRNN~\cite{GraphRNN}, BiGG~\cite{BiGG}, GraphGen~\cite{GraphGen}),
while one-shot methods generate graphs holistically via latent-variable or adversarial models
(e.g., JT-VAE~\cite{JT-VAE}, MolGAN~\cite{MolGAN}, SPECTRE~\cite{SPECTRE}).
Diffusion-based methods have evolved from continuous score
models (e.g., GDSS~\cite{GDSS}) to discrete diffusion
(e.g., DiGress~\cite{DiGress}, DisCo~\cite{DisCo}). Recently, discrete flow matching formulates graph generation as CTMC dynamics,
e.g., CatFlow~\cite{CatFlow}, DeFoG~\cite{DeFoG}.

\textbf{Goal-Directed Molecular Generation and Optimization.}
Early works for molecular generation under task-specific objectives apply RL to sequential molecule construction, e.g., GCPN~\cite{GCPN}, REINVENT~\cite{REINVENT}. Evolutionary search provides an alternative paradigm (e.g., Mol GA~\cite{MolGA}).
Recent methods integrate optimization objectives with deep generative models.
In structure-based settings, diffusion-style generators are optimized with docking rewards,
including GDPO~\cite{GDPO}, DDPO~\cite{DDPO}, and FREED~\cite{Freed}.
Some other systems emphasize candidate reuse and local modification,
including InVirtuoGen~\cite{InVirtuoGen}, GenMol~\cite{Genmol}, f-RAG~\cite{f-RAG}, Genetic GFN~\cite{GeneticGFN}.

\textbf{Molecule and Protein Representation Learning.} Since our work is related to molecule and protein modeling, for completeness we briefly review some works. They primarily learn molecule or protein embeddings in a self-supervised method, and are fine-tuned for downstream tasks \cite{molr,mol2vec,molbert,proteinbert,keap,ontoprotein}. However, they are not graph generation methods. On the contrary, our work focuses on graph generation using online RL.

\section{Conclusion}
We propose Graph-GRPO, an online RL framework for optimizing GFMs with task-specific rewards. By deriving analytic transition probabilities, Graph-GRPO enables fully differentiable rollouts and integrates GFMs with RL training.
A refinement strategy is proposed to refine high-reward samples through localized exploration and prevent the generation of invalid or low-quality graphs. Experiments 
demonstrate that Graph-GRPO consistently improves generation quality and achieves state-of-the-art performance. Our results highlight a principled pathway for aligning GFMs with complex downstream objectives.
A promising future work is to apply Graph-GRPO to broader downstream applications, such as material generation.

\section*{Acknowledgment}
This work is supported in part by the National Natural Science Foundation of China (No. 62322203).

\section*{Impact Statement}
This paper presents work whose goal is to advance the field of Graph Machine Learning. There are many potential societal consequences of our work, none of which we feel must be specifically highlighted here.

\bibliography{ref}
\bibliographystyle{icml2026}

\newpage
\appendix
\onecolumn

\section{Derivation of Discrete Flow Matching for Graphs}
\label{app:derivation}

\subsection{Factorized Probability Path}

As defined in Section~\ref{sec:preliminary}, the graph state $z_t$ consists of node features $X$ and edge features $E$. We assume the generative distribution factorizes over the dimensions of the graph. Let $z_t$ denote the discrete variable at a specific dimension (representing either a node or an edge feature), and $z_{t+\mathrm{d}t}$ denote a target state at that dimension.

For a single dimension, we define the conditional probability path $p_{t \mid 1}(z_t \mid z_1)$ as a linear interpolation between a non-uniform prior $p_0$ and the clean data $z_1$:

\begin{equation}
    p_{t \mid 1}(z_t \mid z_1) = t \cdot \delta(z_t, z_1) + (1 - t) \cdot p_0(z_t),
\end{equation}

where $p_0(z_t)$ is the prior probability of state $z_t$. We assume $p_0$ has full support.

\subsection{Derivation of the Ideal Conditional Rate $R_t$}

To simulate this probability path using a Continuous-Time Markov Chain (CTMC), we require a transition rate matrix $R_t^*$ that satisfies the continuity equation. We focus on the transition rate function (a specific entry of the matrix) from the current state $z_t$ to a target candidate state $z_{t+\mathrm{d}t}$ (where $z_{t+\mathrm{d}t} \neq z_t$).

\paragraph{1. Time Derivative and Flux.}
The time derivative of the probability path is:
\begin{equation}
    \partial_t p_{t|1}(z_{t+\mathrm{d}t} \mid z_1) = \delta(z_{t+\mathrm{d}t}, z_1) - p_0(z_{t+\mathrm{d}t}).
\end{equation}
The net probability flux difference $\Delta$ between the target state $z_{t+\mathrm{d}t}$ and the current state $z_t$ is defined as:
\begin{equation}
\begin{aligned}
    \Delta(z_t \to z_{t+\mathrm{d}t}) &:= \partial_t p_{t|1}(z_{t+\mathrm{d}t} \mid z_1) - \partial_t p_{t|1}(z_t \mid z_1) \\
    &= \Big( \delta(z_{t+\mathrm{d}t}, z_1) - \delta(z_t, z_1) \Big) + \Big( p_0(z_t) - p_0(z_{t+\mathrm{d}t}) \Big).
\end{aligned}
\end{equation}

\paragraph{2. Rate Construction.}
Following the formulation in discrete flow matching literature (e.g., DeFoG), the transition rate is defined as the rectified flux normalized by the current probability mass and the count of reachable states:
\begin{equation}
    R_t(z_t, z_{t+\mathrm{d}t} \mid z_1) = \frac{\mathrm{ReLU}\big( \Delta(z_t \to z_{t+\mathrm{d}t}) \big)}{Z_t^{>0} \cdot p_{t|1}(z_t \mid z_1)},
\end{equation}
where $Z_t^{>0} = |\{z_t : p_{t|1}(z_t \mid z_1) > 0\}|$ denotes the number of states with non-zero probability at time $t$. 
Note that under the assumption that the prior $p_0$ has full support, $p_{t|1}(z_t \mid z_1)$ remains positive for all states throughout the process ($t < 1$). Thus, $Z_t^{>0}$ is constant and equal to the total number of categories $S$ (i.e., $Z_t^{>0} \equiv S$).

We analyze the rate in the active flow regime where the current state has not yet reached the clean state (i.e., $z_t \neq z_1$). In this case, $\delta(z_t, z_1) = 0$, and the denominator simplifies to $Z_t^{>0} (1-t) p_0(z_t)$. The rate splits into two cases:

\begin{itemize}
    \item \textbf{Case A: Target is the Clean Data ($z_{t+\mathrm{d}t} = z_1$).}
    \begin{equation}
        R_t(z_t, z_{t+\mathrm{d}t} \mid z_1) = \frac{1 + p_0(z_t) - p_0(z_1)}{Z_t^{>0} (1-t) p_0(z_t)}.
    \end{equation}
    \textit{Note: The ReLU is omitted here because $1 + p_0(z_t) - p_0(z_1) \geq 0$ holds for any valid probability distribution.}

    \item \textbf{Case B: Target is NOT the Clean Data ($z_{t+\mathrm{d}t} \neq z_1$).}
    \begin{equation}
        R_t(z_t, z_{t+\mathrm{d}t} \mid z_1) = \frac{\mathrm{ReLU}(p_0(z_t) - p_0(z_{t+\mathrm{d}t}))}{Z_t^{>0} (1-t) p_0(z_t)}.
    \end{equation}
\end{itemize}

\subsection{Derivation of the Learned Rate $R_t^\theta$}

Since the clean graph $z_1$ is unknown during generation, we parameterize the transitions using the expected rate over the posterior distribution predicted by the network $f_\theta$. 

Let $p_\theta(z_1 = \hat{z}_1 \mid z_t)$ denote the probability assigned by the network that the true clean state is category $\hat{z}_1$, given the current noisy state $z_t$. The parameterized rate $R_t^\theta(z_t, z_{t+\mathrm{d}t})$ is defined as the expectation:
\begin{equation}
    R_t^\theta(z_t, z_{t+\mathrm{d}t})
    =
    \mathbb{E}_{\hat{z}_1 \sim p_\theta(\cdot | z_t)}
    \left[
        R_t(z_t, z_{t+\mathrm{d}t} | \hat{z}_1)
    \right],
\end{equation}

Expanding the expectation by summing over all possible clean states $\hat{z}_1$, we group the terms into three categories based on the relationship between the clean state $\hat{z}_1$, the current state $z_t$, and the destination $z_{t+\mathrm{d}t}$:

\begin{equation}
\begin{aligned}
    R_t^\theta(z_t, z_{t+\mathrm{d}t}) &= 
    \underbrace{p_\theta(z_t \mid z_t) \cdot 0}_{\text{Term 1: } \hat{z}_1 = z_t \text{ (Stability)}} \\
    &+ \underbrace{p_\theta(z_{t+\mathrm{d}t} \mid z_t) \cdot R_t(z_t, z_{t+\mathrm{d}t} \mid z_1=z_{t+\mathrm{d}t})}_{\text{Term 2: } \hat{z}_1 = z_{t+\mathrm{d}t} \text{ (Target-Driven)}} \\
    &+ \sum_{\hat{z}_1 \notin \{z_t, z_{t+\mathrm{d}t}\}} \underbrace{p_\theta(\hat{z}_1 \mid z_t) \cdot R_t(z_t, z_{t+\mathrm{d}t} \mid z_1=\hat{z}_1)}_{\text{Term 3: } \hat{z}_1 \text{ is other (Prior-Correction)}}.
\end{aligned}
\end{equation}

Substituting the closed-form solutions from the previous section (using Case A for Term 2 and Case B for Term 3) and applying the identity $\sum_{\hat{z}_1 \notin \{z_t, z_{t+\mathrm{d}t}\}} p_\theta(\hat{z}_1 \mid z_t) = 1 - p_\theta(z_t \mid z_t) - p_\theta(z_{t+\mathrm{d}t} \mid z_t)$, we obtain the fully parameterized expression:

\begin{equation}
    R_t^\theta(z_t, z_{t+\mathrm{d}t}) = 
    \frac{ 
        p_\theta(z_{t+\mathrm{d}t} \mid z_t) \big(1 + p_0(z_t) - p_0(z_{t+\mathrm{d}t})\big) + 
        \big(1 - p_\theta(z_t \mid z_t) - p_\theta(z_{t+\mathrm{d}t} \mid z_t)\big) \mathrm{ReLU}\big(p_0(z_t) - p_0(z_{t+\mathrm{d}t})\big) 
    }{ 
        Z_t^{>0} (1-t) p_0(z_t) 
    }.
\end{equation}

For notational brevity, we define the shorthand $p_\theta(z) \triangleq p_\theta(z \mid z_t)$ to denote the predicted probability of state $z$. Under this simplified notation, the final analytical transition rate is expressed as:

\begin{equation}
\label{eq:closed_form_rate}
\boxed{
    R_t^\theta(z_t, z_{t+\mathrm{d}t}) = 
    \frac{ 
        p_\theta(z_{t+\mathrm{d}t}) \big(1 + p_0(z_t) - p_0(z_{t+\mathrm{d}t})\big) + 
        \big(1 - p_\theta(z_t) - p_\theta(z_{t+\mathrm{d}t})\big) \mathrm{ReLU}\big(p_0(z_t) - p_0(z_{t+\mathrm{d}t})\big) 
    }{ 
        Z_t^{>0} (1-t) p_0(z_t) 
    }.
}
\end{equation}
\paragraph{Diagonal Elements and Implementation.}
The derivation above focuses on the transition rates for state changes ($z_{t+\mathrm{d}t} \neq z_t$). For theoretical completeness, the diagonal elements of the rate matrix are determined by the conservation of probability mass, satisfying $R_t^\theta(z_t, z_t) = - \sum_{z_{t+\mathrm{d}t} \neq z_t} R_t^\theta(z_t, z_{t+\mathrm{d}t})$.

In practical engineering implementation, specifically for the discrete-time sampling process with a time step $\Delta t$, we do not explicitly model the diagonal rates. Instead, we compute the off-diagonal transition probabilities using the analytical solution in Eq.~\ref{eq:closed_form_rate} (approximated as $p(z_{t+\Delta t}|z_t) \approx R_t^\theta(z_t, z_{t+\Delta t})\Delta t$). The probability of remaining in the current state is then derived by subtracting the sum of off-diagonal probabilities from 1:
\begin{equation}
    p(z_{t+\Delta t} = z_t \mid z_t) = 1 - \sum_{z_{t+\Delta t} \neq z_t} p(z_{t+\Delta t} \mid z_t).
\end{equation}
This approach ensures that the transition probability distribution is always normalized (i.e., the row sum equals 1) and avoids numerical instability.

\newpage
\begin{figure}[t]
  \centering
  \includegraphics[width=\linewidth]{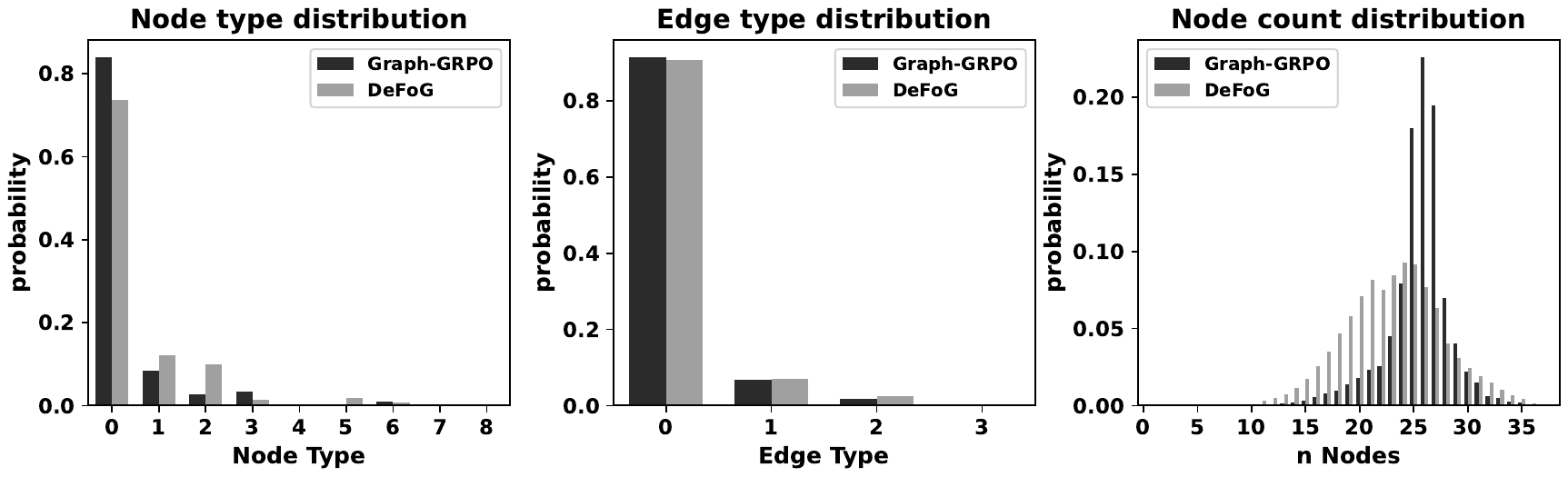}
  \caption{Comparison of learned prior distributions between the pre-trained DeFoG backbone and Graph-GRPO on the DRD2 task. The histograms illustrate the (Left) Node Type, (Middle) Edge Type, and (Right) Node Count distributions. Graph-GRPO dynamically aligns these priors with the characteristics of high-reward candidates, resulting in significant distributional shifts and sharper concentrations compared to the static baseline.}
  \label{fig:node_dist_comparison}
\end{figure}

\section{Dynamic Prior and Graph Size Adjustment}
\label{app:dynamic_adjustment}

In standard discrete flow matching, the prior distribution $p_0$ and the graph size distribution $p(N)$ are estimated from the training dataset and kept fixed during generation. However, under reinforcement learning, high-reward graphs discovered during training often exhibit task-specific structural patterns that differ substantially from the original data distribution. Using a fixed prior during sampling and refinement therefore leads to inefficient exploration and wasted oracle calls.

\paragraph{Non-Parametric Prior Learning during Training.}
During the \emph{training phase}, Graph-GRPO maintains a non-parametric memory of high-reward samples and continuously estimates empirical distributions from these samples. This process does not rely on gradient-based optimization and is purely statistics-driven.

Specifically, we maintain a global buffer of high-performing graphs collected throughout RL training. From this buffer, we estimate empirical distributions over node types, edge types, and graph sizes. These empirical statistics are updated online and are \emph{not} used to modify the denoising network parameters during training.

\paragraph{Usage during Sampling}
During the \emph{sampling stage}, the learned non-parametric distributions are used to guide generation. Initial noise states and graph sizes are sampled from these distributions instead of the original dataset-derived priors. This adjustment is applied consistently to both de novo generation and all refinement steps, allowing the model to concentrate sampling on structural regions that have been empirically shown to yield high rewards.

\subsection{Global Reward Buffer}
We maintain a global priority queue, denoted as the Global Reward Buffer $\mathcal{M}$, with a fixed capacity $K_{\mathcal{M}}$ (e.g., $K_{\mathcal{M}} = 1000$). Throughout training, any sampled trajectory $\tau$ with reward $R(\tau)$ exceeding a minimal threshold is treated as a candidate. At the end of each sampling phase, new candidates are merged into $\mathcal{M}$, and only the top-$K_{\mathcal{M}}$ unique graphs ranked by reward are retained. This buffer serves as the non-parametric source for estimating adaptive generative priors.

\subsection{Adaptive Prior Distribution ($p_0$)}
The prior distribution $p_0 = \{p_0^X, p_0^E\}$ determines the initial noise state $z_0$. At the end of each training epoch, we compute empirical marginal distributions of node and edge types from the Global Reward Buffer $\mathcal{M}$, denoted as $\hat{p}_{\mathcal{M}}^X$ and $\hat{p}_{\mathcal{M}}^E$.

These empirical distributions are used to update the prior via an exponential moving average:
\begin{align}
    p_0^X &\leftarrow (1 - \alpha)\, p_0^X + \alpha\, \hat{p}_{\mathcal{M}}^X, \\
    p_0^E &\leftarrow (1 - \alpha)\, p_0^E + \alpha\, \hat{p}_{\mathcal{M}}^E,
\end{align}
where $\alpha = 0.05$.  
This update is performed across training epochs and does not affect the continuous-time denoising dynamics within a single sampling trajectory.

\subsection{Adaptive Graph Size Distribution}
We apply the same non-parametric strategy to the graph size distribution. From the graphs stored in $\mathcal{M}$, we construct an empirical size histogram $\hat{P}_{\mathcal{M}}(N)$ and update the sampling distribution as:
\begin{equation}
    P(N) \leftarrow (1 - \alpha)\, P(N) + \alpha\, \hat{P}_{\mathcal{M}}(N).
\end{equation}

During refinement, the adaptive size distribution is only used to initialize candidate graphs. Once a graph enters the localized refinement loop, its number of nodes is fixed to preserve the identified core scaffold. As illustrated in Figure~\ref{fig:node_dist_comparison}, jointly adapting node types, edge types, and graph sizes enables Graph-GRPO to focus sampling on high-reward chemical subspaces and significantly improves sampling efficiency.

\section{Implementation Details}
\label{app:implementation}

\subsection{Compute Resources}
\label{app:compute_resources}

Graph-GRPO was trained on a single NVIDIA RTX PRO 6000 GPU (96GB VRAM) and 22 CPU cores.
Under this setting, the approximate runtime is 20 hours for each molecular task (Protein Docking and Target Property Generation) and 5 hours for synthetic graph generation.

\subsection{Model Architecture}
\label{app:model_arch}

We strictly follow the default model configurations provided by the official implementation of DeFoG~\cite{DeFoG}.
The backbone is a Graph Transformer that incorporates Relative Random Walk Probabilities (RRWP) as structural encodings.
For all tasks, we set the number of attention heads to 8, the node hidden dimension to $d_x=256$, and the edge hidden dimension to $d_e=64$.
Consistent with DeFoG, the feed-forward network (FFN) dimension is set equal to the node hidden dimension ($d_{ff}=256$).

The specific configurations vary slightly between synthetic and molecular tasks, as detailed below:
\begin{itemize}
    \item \textbf{Synthetic Tasks (Planar/Tree):} We use a 10-layer model ($L=10$) and do not use RRWP ($rrwp\_steps=0$) or global features, following the lightweight setting of the baseline.
    \item \textbf{Molecular Tasks (Protein Docking/PMO):} We use a 12-layer model ($L=12$) with RRWP steps set to 20 to capture complex chemical structures.
\end{itemize}

\subsection{Training and Refinement Configurations}
\label{app:training_config}

\paragraph{Training.}
The policy network is optimized with AdamW~\cite{AdamW} ($\beta_1=0.9$, $\beta_2=0.999$) and weight decay $1\times10^{-4}$.
The learning rate is initialized at $2\times10^{-5}$ and decayed by a factor of 0.5 upon reward plateau, with a minimum of $1\times10^{-5}$.
To accommodate GPU memory constraints, the physical training batch size is adjusted between 5 and 40 depending on the graph size. 
We correspondingly tune the gradient accumulation steps to maintain a consistent effective batch size of 200 graphs across all experiments.
Gradients are clipped to a maximum norm of 5.0.

\paragraph{GRPO and Partial Trajectory Training.}
We use a group size of $K=60$ for GRPO~\cite{GRPO}.
We adopt asymmetric PPO~\cite{PPO} clipping following DAPO~\cite{DAPO}, with $\epsilon_{\text{low}}=0.2$ and $\epsilon_{\text{high}}=0.28$.
Advantages are clipped to $[-5, 5]$, and a KL penalty coefficient $\beta=0.005$ is used.

\paragraph{Experimental Settings} To ensure reproducibility, we fix random seeds for all experiments. \begin{itemize} \item \textbf{Synthetic Graph Generation:} Evaluations are performed over 5 independent random seeds. \item \textbf{Molecular Optimization:} For both Protein Docking and PMO tasks, we use 3 specific seeds (0, 1, and 2). \end{itemize} All reported results represent the mean values averaged over these seeds.
Regarding the oracle budget allocation in the refinement strategy, we first utilize 300 oracle calls for de novo generation to initialize the top-M pool.

Regarding the oracle budget allocation in the refinement strategy, we first utilize 300 oracle calls and select the best M molecules initialize the top-M pool. Subsequently, we generate 150 variants for each molecule in the pool until 2,000 oracle calls are consumed, after which we increase the exploration intensity to generate 500 variants per molecule for the remaining 2,000 to 10,000 oracle calls.

\paragraph{Dynamic Prior Update.}
Prior distributions over node types and graph sizes are updated with momentum $\alpha=0.05$.
A buffer of the top-1000 samples is maintained, and the update is triggered when reward improvement exceeds 0.001.

\paragraph{Refinement.}
We allocate the first 300 oracle calls to initialize the candidate pool.
Refinement is then performed with a fixed noise level $t_\epsilon=0.8$ to provide a robust balance between stability and exploration across all benchmarks, and a Top-M pool size is 5.

\section{Additional Visualizations}
\label{app:visualization}
In this section, we provide additional visualizations to complement the experimental results in the main text.
Figure~\ref{fig:vis_sample_diversity} presents randomly sampled molecules generated by Graph-GRPO, showing diverse scaffolds, ring systems, and functional groups.
Figure~\ref{fig:vis_pred_z1} illustrates the intermediate model predictions during the denoising process, and Figure~\ref{fig:vis_training_evolution} displays the evolution of generated molecules across different training epochs.

\begin{figure}[H]
    \centering
    \includegraphics[width=0.80\linewidth]{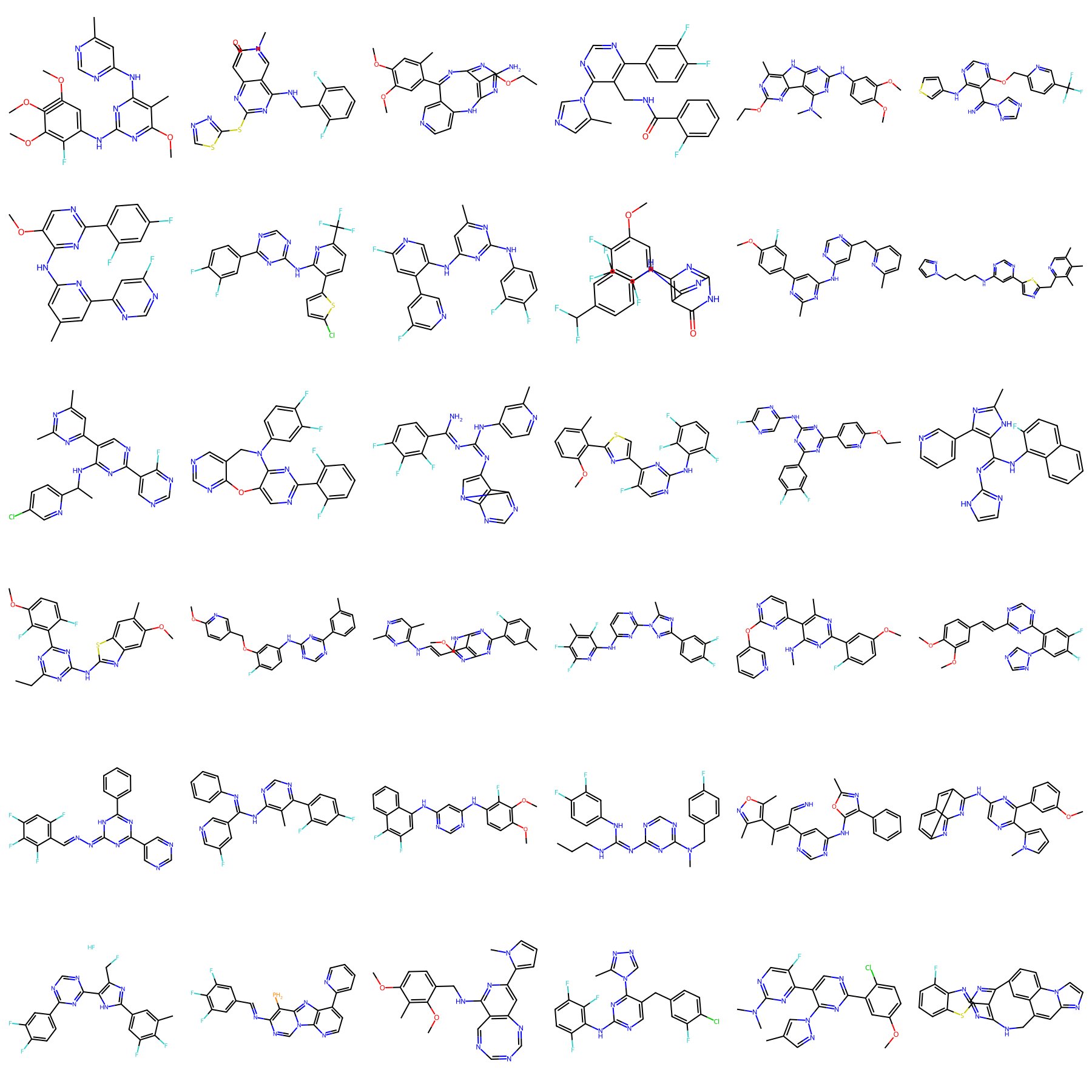}
    \caption{Randomly sampled molecules generated by Graph-GRPO. The generated molecules exhibit diverse scaffolds, ring systems, and functional groups, suggesting that the model does not collapse to repeated molecular structures.}
    \label{fig:vis_sample_diversity}
\end{figure}

\begin{figure*}[h]
\label{fig:vis_pred_z1}
    \centering
    \includegraphics[width=\linewidth]{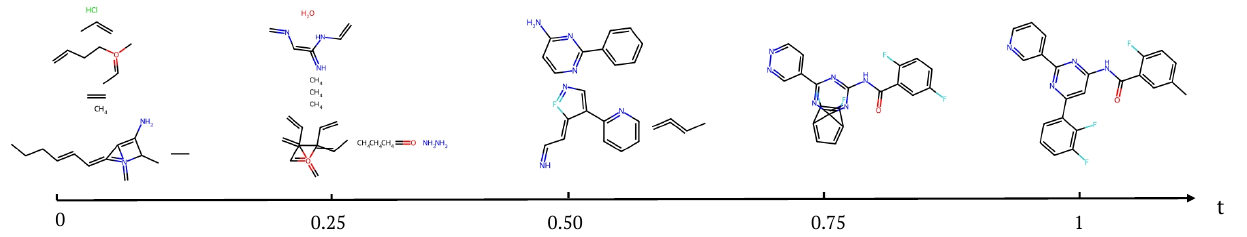}
    \caption{\textbf{Visualization of predicted clean states $\hat{z}_1$.} 
    We display the clean graphs predicted by the denoiser at different time steps.
    The model infers the global topology of the target molecule early in the denoising process.}
    \label{fig:vis_pred_z1}
\end{figure*}

\begin{figure*}[h]

    \centering
    \setlength{\tabcolsep}{1pt}
    \captionsetup[subfigure]{labelformat=empty, font=scriptsize}

    \begin{subfigure}{0.165\textwidth}
        \centering
        \includegraphics[width=\linewidth]{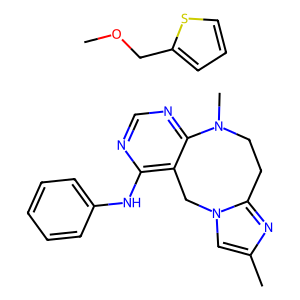}
        \caption{score = 0.02\\epoch 0}
    \end{subfigure}\hfill
    \begin{subfigure}{0.165\textwidth}
        \centering
        \includegraphics[width=\linewidth]{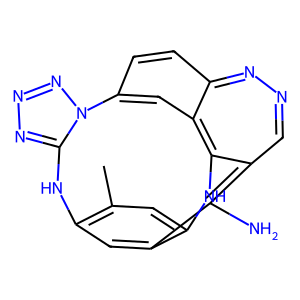}
        \caption{score = 0.05\\epoch 10}
    \end{subfigure}\hfill
    \begin{subfigure}{0.165\textwidth}
        \centering
        \includegraphics[width=\linewidth]{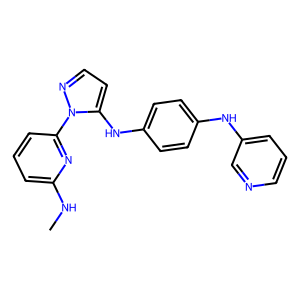}
        \caption{score = 0.50\\epoch 20}
    \end{subfigure}\hfill
    \begin{subfigure}{0.165\textwidth}
        \centering
        \includegraphics[width=\linewidth]{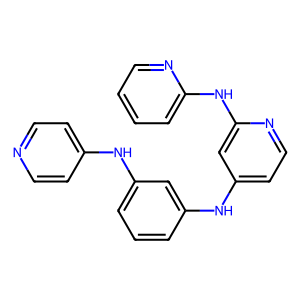}
        \caption{score = 0.39\\epoch 30}
    \end{subfigure}\hfill
    \begin{subfigure}{0.165\textwidth}
        \centering
        \includegraphics[width=\linewidth]{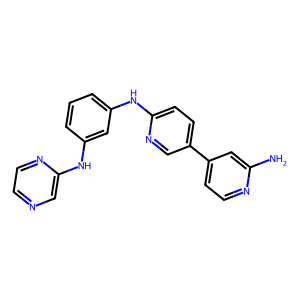}
        \caption{score = 0.67\\epoch 40}
    \end{subfigure}\hfill
    \begin{subfigure}{0.165\textwidth}
        \centering
        \includegraphics[width=\linewidth]{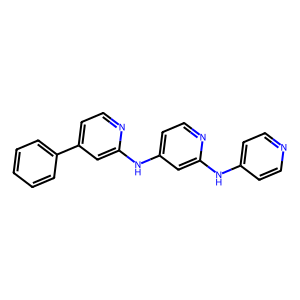}
        \caption{score = 0.82\\epoch 50}
    \end{subfigure}

    \caption{\textbf{Evolution of generated molecules across training epochs on the JNK3 task.}
    We showcase samples obtained at different stages of training.}

    \label{fig:vis_training_evolution}
\end{figure*}

\section{Additional Ablation Studies}
\label{app:additional_ablation}

In this section, we provide additional ablation studies to further analyze the key design choices and practical efficiency of Graph-GRPO. Specifically, we study the influence of the renoising time $t_{\epsilon}$ in refinement, compare Graph-GRPO with a REINFORCE variant without analytic transition, and benchmark the computational cost under different graph sizes and denoising steps.

\subsection{Effect of Renoising Time}
\label{app:t_epsilon_ablation}

We evaluate the influence of renoising time $t_{\epsilon}$, a crucial hyperparameter in the refinement. We vary the value $t_{\epsilon}$ from 0.0 to 0.9, and report the results in Table~\ref{tab:t_epsilon_ablation}. It can be observed that the best results are obtained with a large value of $t_{\epsilon}$, \ie, 0.7 or 0.9. This is because a larger $t_{\epsilon}$ indicates a smaller noise perturbation for promising candidates, allowing for controlled exploration. In contrast, smaller values of $t_{\epsilon}$ may destroy molecular structures, leading to performance degeneration, but still perform better than de novo generation ($t_{\epsilon}=0.0$).

\begin{table}[h]
\centering
\caption{Effect of renoising time $t_\epsilon$ on the PMO benchmark.}
\label{tab:t_epsilon_ablation}
\small
\renewcommand{\arraystretch}{1.2}
\setlength{\tabcolsep}{8pt}
\begin{tabular}{lccccc}
\toprule
\textbf{Task} & \textbf{0.9} & \textbf{0.7} & \textbf{0.5} & \textbf{0.3} & \textbf{0.0} \\
\midrule
Deco Hop         & 0.682 & \textbf{0.709} & 0.641 & 0.646 & 0.638 \\
Osimertinib MPO  & \textbf{0.918} & 0.899 & 0.865 & 0.862 & 0.878 \\
Valsartan SMARTS & \textbf{0.949} & 0.827 & 0.584 & 0.211 & 0.174 \\
\bottomrule
\end{tabular}
\end{table}

\subsection{Comparison with REINFORCE without Analytic Transition}
\label{app:reinforce_ablation}

We further compare Graph-GRPO with a REINFORCE variant that does not use the proposed analytic transition. Both methods are evaluated on the Tree generation task with 64 nodes. They use the same pre-trained DeFoG backbone, identical hyperparameters, including learning rate $1\times10^{-5}$, group size 20, 50 denoising steps, and 120 sampled groups per epoch, as well as the same trajectory-level broadcast reward.

The key difference lies in the training objective. Graph-GRPO uses the analytic rate matrix in Proposition~\ref{prop} for both sampling and training, and optimizes the transition probability $\log p(z_{t+\Delta t}|z_t)$. In contrast, the REINFORCE variant samples trajectories using the original Monte Carlo rate matrix in DeFoG. Since the transition probability is not differentiable without the analytic formulation, REINFORCE instead optimizes the endpoint prediction $\log p_\theta(z_1|z_t)$.

\begin{table}[h]
\centering
\caption{Comparison with REINFORCE variant on Tree ($N=64$).}
\label{tab:reinforce_ablation}
\small
\renewcommand{\arraystretch}{1.2}
\setlength{\tabcolsep}{6pt}
\begin{tabular}{lcccc}
\toprule
\textbf{Method} & \makecell{\textbf{Reward}\\\textbf{@Step 0}} & \makecell{\textbf{Reward}\\\textbf{@Step 500}} & \makecell{\textbf{Grad}\\\textbf{Norm}} & \makecell{\textbf{Clip}\\\textbf{Rate}} \\
\midrule
REINFORCE w/o analytic & 1.03 & 0.312 & 130.6 & 100\% \\
\rowcolor{gray!10} \textbf{Graph-GRPO w/ analytic} & 1.03 & \textbf{1.537} & \textbf{0.97} & \textbf{0.7\%} \\
\bottomrule
\end{tabular}
\end{table}

As shown in Table~\ref{tab:reinforce_ablation}, REINFORCE suffers from unstable training and its reward drops from 1.03 to 0.312. By contrast, Graph-GRPO improves the reward to 1.537 with a much smaller gradient norm and a substantially lower gradient clipping rate. This demonstrates that the analytic transition provides a stable and aligned training signal for RL optimization.

Without the analytic transition, the optimized objective $\log p_\theta(z_1|z_t)$ is misaligned with the actual sampling process, which is governed by the one-step transition distribution $p(z_{t+\Delta t}|z_t)$. Since $z_{t+\Delta t}$ is only a small step away from $z_t$ rather than the clean endpoint $z_1$, the resulting gradient signal becomes unstable. The analytic formulation resolves this mismatch by making the training objective exactly match the transition distribution used during sampling.

\subsection{Scalability Analysis}
\label{app:scalability}

We benchmark the per-graph computational cost of Graph-GRPO on the Tree task using a single GPU. The runtime is averaged over 5 epochs and measured in milliseconds per graph, while memory is measured in GB per graph. We study scalability with respect to both graph size $N$ and trajectory length $T$.

\begin{table*}[h]
\centering
\caption{Scalability analysis of Graph-GRPO computational cost.}
\label{tab:scalability_merged}
\captionsetup[subtable]{font=small,justification=centering}

\begin{subtable}[t]{0.48\textwidth}
\centering
\caption{Scaling with graph size $N$ (fixed $T=50$).}
\label{tab:scalability_graph_size}
\small
\renewcommand{\arraystretch}{1.1}
\setlength{\tabcolsep}{5pt}
\begin{tabular}{ccccc}
\toprule
\textbf{$N$} & \makecell{\textbf{Sample}\\\textbf{Time}} & \makecell{\textbf{Train}\\\textbf{Time}} & \makecell{\textbf{Sample}\\\textbf{Mem.}} & \makecell{\textbf{Train}\\\textbf{Mem.}} \\
\midrule
32 & 22.9 & 131.2 & 0.37 & 0.78 \\
48 & 18.0 & 199.6 & 0.64 & 1.56 \\
64 & 35.3 & 293.0 & 1.01 & 2.53 \\
80 & 52.0 & 425.6 & 1.49 & 3.89 \\
96 & 77.9 & 578.5 & 2.08 & 5.44 \\
\bottomrule
\end{tabular}
\end{subtable}
\hfill
\begin{subtable}[t]{0.48\textwidth}
\centering
\caption{Scaling with denoising steps $T$ (fixed $N=64$).}
\label{tab:scalability_steps}
\small
\renewcommand{\arraystretch}{1.1}
\setlength{\tabcolsep}{5pt}
\begin{tabular}{ccccc}
\toprule
\textbf{$T$} & \makecell{\textbf{Sample}\\\textbf{Time}} & \makecell{\textbf{Train}\\\textbf{Time}} & \makecell{\textbf{Sample}\\\textbf{Mem.}} & \makecell{\textbf{Train}\\\textbf{Mem.}} \\
\midrule
50  & 35.3  & 293.0  & 1.01 & 2.53 \\
100 & 60.4  & 525.0  & 1.02 & 4.91 \\
150 & 90.5  & 780.4  & 1.04 & 7.47 \\
200 & 118.8 & 1043.9 & 1.05 & 9.84 \\
\addlinespace[1.15em] 
\bottomrule
\end{tabular}
\end{subtable}
\end{table*}

Table~\ref{tab:scalability_graph_size} reports the cost under different graph sizes with fixed denoising steps $T=50$. The training memory grows approximately with $O(N^2)$, mainly due to the graph Transformer attention over node-edge structures. For example, increasing $N$ from 32 to 96 leads to a 7.0$\times$ increase in training memory, which is close to the theoretical $N^2$ growth.

Table~\ref{tab:scalability_steps} reports the cost under different denoising steps with fixed graph size $N=64$. Training time and memory scale nearly linearly with $T$. When increasing $T$ from 50 to 200, training time and memory increase by 3.56$\times$ and 3.89$\times$, respectively, close to the theoretical 4.0$\times$ growth. In contrast, sampling memory remains almost unchanged across different trajectory lengths.

\newpage
\section{Reward Details}
\label{app:reward_details}

Following previous works and standard evaluation benchmarks, we establish a unified protocol for handling generated graphs with disconnected components.
Specifically, for the \textbf{Protein Docking} task, we extract the largest connected component (LCC) of the generated graph prior to scoring.
For \textbf{Target Property Generation} tasks, we directly utilize the standard oracle functions provided by the PMO framework~\cite{PMO} to calculate rewards.

\subsection{Protein Docking Rewards}
\label{app:protein_docking_rewards}
In the protein docking optimization task, we employ a composite reward function consisting of four terms, each strictly bounded in $[0, 1]$. The detailed definitions are as follows:
\begin{itemize}
    \item \textbf{Drug-likeness ($R_{\text{QED}}$):} This component ensures basic drug-like properties, defined as an indicator function $\mathbb{I}(\text{QED}(G) > 0.5)$.
    \item \textbf{Synthetic Accessibility ($R_{\text{SA}}$):} This represents the normalized synthetic accessibility score, calculated as $(10 - \text{SA}(G))/9$. It favors molecules that are easier to synthesize (lower SA values).
    \item \textbf{Novelty ($R_{\text{Nov}}$):} This term encourages exploration beyond the training distribution $\mathcal{D}_{\text{train}}$. It is computed as $1 - \max_{G' \in \mathcal{D}_{\text{train}}} \text{Sim}(G, G')$, where $\text{Sim}$ denotes the Tanimoto similarity of Morgan fingerprints.
    \item \textbf{Docking Score ($R_{\text{DS}}$):} This is the normalized docking score derived from the binding energy $E$ (kcal/mol), formulated as $R_{\text{DS}} = \text{Clip}(-E/20, 0, 1)$, where lower energy corresponds to a higher reward.
\end{itemize}

\textbf{Early Pruning.} To improve training efficiency, we also implement the early-pruning strategy from GDPO~\cite{GDPO}: the computationally expensive docking simulation ($R_{\text{DS}}$) is performed only for molecules that satisfy the drug-likeness constraint ($R_{\text{QED}} > 0$); otherwise, $R_{\text{DS}}$ is set to 0.

\subsection{Target Property Optimization Rewards} \label{app:target_property_rewards} 
For the \textit{Valsartan SMARTS} task, the objective is to generate molecules containing a specific substructure while matching the physicochemical properties of a reference molecule (Sitagliptin). The total reward $R(G)$ is defined as the geometric mean of a binary substructure score $s_{\text{struct}} \in \{0, 1\}$ and three physicochemical property scores $\{s_1, s_2, s_3\} \in (0, 1]$: \begin{equation} R(G) = \left( s_{\text{struct}} \cdot s_1 \cdot s_2 \cdot s_3 \right)^{\frac{1}{4}}. \end{equation} The property scores $s_i$ are calculated using Gaussian kernels centered at the reference values, which decay exponentially as the molecule's properties deviate from the target. 

\paragraph{The Sparse Reward Challenge.} This geometric mean formulation creates a severe optimization bottleneck during the cold-start phase. Even in the rare event that the model accidentally generates a valid scaffold ($s_{\text{struct}}=1$), the associated physicochemical properties typically deviate significantly from the specific targets of Sitagliptin. Due to the rapid decay of Gaussian kernels, the property scores $s_i$ drop to near-zero values. Consequently, the multiplicative nature of the geometric mean suppresses the total reward $R(G)$ to a negligible magnitude (e.g., $<10^{-4}$). This results in an extremely weak gradient signal that is easily drowned out by training noise, preventing the model from reinforcing these valid structural discoveries. 

\paragraph{Solution: Two-Stage Training Curriculum.} To address the vanishing gradient issue, we implement a curriculum strategy. We emphasize that this is strictly a training-time scheduling technique designed to bootstrap the policy, and is distinct from the refinement strategy used during inference. The training process is divided into two phases: \begin{enumerate} \item \textbf{Training Stage 1 (Structural Warm-up):} We first optimize the policy using a simplified objective that focuses solely on the binary substructure constraint ($R = s_{\text{struct}}$). Unlike the geometric mean, this provides a clear, non-vanishing gradient signal, enabling the policy to quickly learn the topological requirements of the target scaffold. \item \textbf{Training Stage 2 (Multi-Objective Training):} We use the checkpoint saved from Stage 1 to initialize the full RL training. With the policy already capable of generating the correct scaffold, the geometric mean reward becomes non-zero ($R > 0$). This allows the RL algorithm to effectively optimize the remaining physicochemical property scores. \end{enumerate}

\newpage
\section{Additional Results}
\label{app:additional_results}

\subsection{Protein Docking Metrics.}
\label{app:protein_docking_metrics}
Table~\ref{tab:detailed_performance_appendix} details the generative metrics for the five target proteins. 
The results show that Graph-GRPO maintains near-perfect validity and uniqueness ($100\%$) while optimizing for high binding affinities.
\begin{table*}[h]
\caption{Detailed performance metrics of Graph-GRPO on five distinct targets.}
\label{tab:detailed_performance_appendix}
\centering
\renewcommand{\arraystretch}{1.2}
\setlength{\extrarowheight}{1pt}
\setlength{\tabcolsep}{6pt}
\setlength{\aboverulesep}{0pt}
\setlength{\belowrulesep}{0pt}

\resizebox{0.6\textwidth}{!}{
\begin{tabular}{l cccccc}
\toprule
\textbf{Target} 
& \textbf{Valid.} 
& \textbf{Uniq.} 
& \textbf{Nov.} 
& \textbf{Avg. QED} 
& \textbf{Avg. SA} 
& \textbf{Avg. Reward} \\
\midrule
\textit{parp1} & 97.803 & 100.000 & 0.945 & 0.667 & 0.655 & 0.459 \\
\textit{fa7}   & 98.617 & 100.000 & 0.992 & 0.726 & 0.650 & 0.358 \\
\textit{braf}  & 98.405 & 100.000 & 0.976 & 0.723 & 0.679 & 0.447 \\
\textit{jak2}  & 98.763 & 100.000 & 0.987 & 0.750 & 0.674 & 0.466 \\
\textit{5ht1b} & 98.145 & 100.000 & 0.977 & 0.720 & 0.675 & 0.421 \\
\bottomrule
\end{tabular}
}
\end{table*}

\subsection{Synthetic Graph Generation.}
\label{app:synthetic_graph_generation}
Table~\ref{tab:synthetic_graphs_full} presents the extended comparison on the Planar and Tree datasets. 
For fair and consistent comparison, \textbf{all baseline results (including DeFoG) are directly cited from the original DeFoG paper~\cite{DeFoG}}.
We strictly followed their evaluation protocol to evaluate Graph-GRPO under the same settings.

\begin{table*}[h]
\scriptsize
    \caption{
    \textbf{Extended Graph Generation Performance on Synthetic Graphs.} 
    We report the detailed MMD metrics and validity statistics.
    We only report the performance of Graph-GRPO (Ours) based on our own experiments, averaged over 5 runs.
    }
    \label{tab:synthetic_graphs_full}
    \centering
    \resizebox{\linewidth}{!}{
    \begin{tabular}{lcccccccccc}
        \toprule
        & \multicolumn{10}{c}{\textbf{Planar Dataset}} \\
        \cmidrule(lr){2-11}
        \textbf{Model} & Deg.\,$\downarrow$ & Clus.\,$\downarrow$ & Orbit\,$\downarrow$ & Spec.\,$\downarrow$ & Wavelet\,$\downarrow$  & Ratio\,$\downarrow$ & Valid\,$\uparrow$ & Unique\,$\uparrow$ & Novel\,$\uparrow$ & V.U.N.\,$\uparrow$ \\
        \midrule
        Train set & 0.0002 & 0.0310 & 0.0005 & 0.0038 & 0.0012 & 1.0   & 100  & 100  & 0.0   & 0.0  \\
        \midrule
        GraphRNN   & 0.0049 & 0.2779 & 1.2543 & 0.0459 & 0.1034 & 490.2 & 0.0   & 100  & 100  & 0.0\\
        GRAN       & 0.0007 & 0.0426 & 0.0009 & 0.0075 & 0.0019 & 2.0   & 97.5  & 85.0 & 2.5  & 0.0  \\
        SPECTRE    & 0.0005 & 0.0785 & 0.0012 & 0.0112 & 0.0059 & 3.0   & 25.0  & 100  & 100  & 25.0 \\
        DiGress    & 0.0007 & 0.0780 & 0.0079 & 0.0098 & 0.0031 & 5.1   & 77.5  & 100  & 100  & 77.5  \\
        EDGE       & 0.0761 & 0.3229 & 0.7737 & 0.0957 & 0.3627 & 431.4 & 0.0   & 100  & 100  & 0.0\\
        BwR        & 0.0231 & 0.2596 & 0.5473 & 0.0444 & 0.1314 & 251.9 & 0.0   & 100  & 100  & 0.0 \\
        BiGG       & 0.0007 & 0.0570 & 0.0367 & 0.0105 & 0.0052 & 16.0  & 62.5  & 85.0 & 42.5 & 5.0 \\
        GraphGen   & 0.0328 & 0.2106 & 0.4236 & 0.0430 & 0.0989 & 210.3 & 7.5   & 100  & 100  & 7.5 \\
        HSpectre (one-shot) & 0.0003 & 0.0245 & 0.0006 & 0.0104 & 0.0030 & 1.7   & 67.5  & 100  & 100  & 67.5 \\
        HSpectre   & 0.0005 & 0.0626 & 0.0017 & 0.0075 & 0.0013 & 2.1   & 95.0  & 100  & 100  & 95.0 \\
        GruM       & 0.0004 & 0.0301 & 0.0002 & 0.0104 & 0.0020 & 1.8   & ---   & ---  & ---  & 90.0 \\
        CatFlow    & 0.0003 & 0.0403 & 0.0008 & ---    & ---    & ---   & ---   & ---  & ---  & 80.0 \\
        DisCo      & 0.0002 & 0.0403 & 0.0009 & ---    & ---    & ---   & 83.6  & 100.0 & 100.0 & 83.6 \\
        Cometh - PC& 0.0006 & 0.0434 & 0.0016 & 0.0049 & ---    & ---   & 99.5  & 100.0 & 100.0 & 99.5 \\
        DeFoG      & 0.0005 & 0.0501 & 0.0006 & 0.0072 & 0.0014 & 1.6   & 99.5  & 100.0 & 100.0 & 99.5 \\
        \rowcolor{gray!12}
        \textbf{Graph-GRPO (50 steps)} & 0.0002 & 0.0325 & 0.0012 & 0.0067 & 0.0013 & 1.5 & 95.0 & 100.0 & 100.0 & 95.0 \\

        \midrule
        & \multicolumn{10}{c}{\textbf{Tree Dataset}} \\
        \cmidrule(lr){2-11}
        \textbf{Model} & Deg.\,$\downarrow$ & Clus.\,$\downarrow$ & Orbit\,$\downarrow$ & Spec.\,$\downarrow$ & Wavelet\,$\downarrow$  & Ratio\,$\downarrow$ & Valid\,$\uparrow$ & Unique\,$\uparrow$ & Novel\,$\uparrow$ & V.U.N.\,$\uparrow$ \\
        \midrule
        Train set  & 0.0001 & 0.0000 & 0.0000 & 0.0075 & 0.0030 & 1.0   & 100  & 100  & 0.0   & 0.0 \\
        \midrule
        GRAN       & 0.1884 & 0.0080 & 0.0199 & 0.2751 & 0.3274 & 607.0 & 0.0   & 100  & 100  & 0.0 \\
        DiGress    & 0.0002 & 0.0000 & 0.0000 & 0.0113 & 0.0043 & 1.6   & 90.0  & 100  & 100 & 90.0  \\
        EDGE       & 0.2678 & 0.0000 & 0.7357 & 0.2247 & 0.4230 & 850.7 & 0.0   & 7.5   & 100  & 0.0 \\
        BwR        & 0.0016 & 0.1239 & 0.0003 & 0.0480 & 0.0388 & 11.4  & 0.0   & 100  & 100  & 0.0 \\
        BiGG       & 0.0014 & 0.0000 & 0.0000 & 0.0119 & 0.0058 & 5.2   & 100   & 87.5 & 50.0 & 75.0  \\
        GraphGen   & 0.0105 & 0.0000 & 0.0000 & 0.0153 & 0.0122 & 33.2  & 95.0  & 100  & 100  & 95.0 \\
        HSpectre (one-shot) & 0.0004 & 0.0000 & 0.0000 & 0.0080 & 0.0055 & 2.1   & 82.5  & 100  & 100  & 82.5 \\
        HSpectre   & 0.0001 & 0.0000 & 0.0000 & 0.0117 & 0.0047 & 4.0   & 100   & 100  & 100  & 100\\
        DeFoG      & 0.0002 & 0.0000 & 0.0000 & 0.0108 & 0.0046 & 1.6   & 96.5  & 100.0 & 100.0 & 96.5 \\
        \rowcolor{gray!12}
        \textbf{Graph-GRPO (50 steps)} & 0.0004 & 0.0000 & 0.0000 & 0.0105 & 0.0045 & 2.2 & 97.5 & 100.0 & 100.0 & 97.5 \\
        \bottomrule
    \end{tabular}
    }
\end{table*}

\end{document}